\newtheorem{lemma}{Lemma}
\newtheorem{theorem}{Theorem}[section]
\newcommand*\diff{\mathop{}\mathrm{d}}
\DeclarePairedDelimiterX{\infdivx}[2]{\big(}{\big)}{%
	#1\;\delimsize\|\;#2%
}
\newcommand{\infdiv}{\text{KL}\infdivx}
\DeclarePairedDelimiter{\norm}\lVert\rVert
\DeclareMathOperator{\E}{\mathbb{E}}
\newcommand\equalhat{\mathrel{\stackon[1.5pt]{=}{\stretchto{%
				\scalerel*[\widthof{=}]{\wedge}{\rule{1ex}{3ex}}}{0.5ex}}}}
\newcommand{\expnumber}[2]{{#1}\mathrm{e}{#2}}
\newcommand\thickbar[1]{\accentset{\rule{\widthof{#1}*\real{.75}}{.75pt}}{#1}}
\newcommand\Tstrut{\rule{0pt}{2.6ex}}         
\newenvironment{keywords}
{\bgroup\leftskip 20pt\rightskip 20pt \small\noindent{\bf Keywords:} }%
{\par\egroup\vskip 0.25ex}
\DeclareRobustCommand{\Udots}{%
	\vcenter{\offinterlineskip
		\halign{%
			\hbox to .8em{##}\cr
			\hfil.\cr\noalign{\kern.2ex}
			\hfil.\hfil\cr\noalign{\kern.2ex}
			.\hfil\cr}%
	}%
}
\title{Complexity-Aware Training of Deep Neural Networks for Optimal Structure Discovery}
\author{Valentin Frank Ingmar Guenter\thanks{Email:
		vgunter@uci.edu}\ \ and\ \ Athanasios~Sideris\thanks{Email:
		asideris@uci.edu}\\ \\  Department of Mechanical and
	Aerospace Engineering,\\ University of California, Irvine,\\
	Irvine, CA, 92697}
\date{}
\begin{document}

\maketitle
\begin{abstract}
We propose a novel algorithm for combined unit and layer pruning of deep neural networks that functions during training and without requiring a pre-trained network to apply. Our algorithm optimally trades-off learning accuracy and pruning levels while balancing layer vs. unit pruning and computational vs. parameter complexity using only three user-defined parameters, which are easy to interpret and tune. To this end, we formulate a stochastic optimization problem over the network weights and the parameters of variational Bernoulli distributions for binary Random Variables taking values either 0 or 1 and scaling the units and layers of the network. Then, optimal network structures are found as the solution to this optimization problem.
Pruning occurs when a variational parameter converges to 0 rendering the corresponding structure permanently inactive, thus saving computations both during training and prediction. A key contribution of our approach is to define a cost function that combines the objectives of prediction accuracy and network pruning in a computational/parameter complexity-aware manner and the automatic selection of the many regularization parameters.
We show that the proposed algorithm converges to solutions of the optimization problem corresponding to deterministic networks. We analyze the ODE system that underlies our stochastic optimization algorithm and establish domains of attraction for the dynamics of the network parameters. These theoretical results lead to practical pruning conditions avoiding the premature pruning of units on each layer as well as whole layers during training.
We evaluate our method on the CIFAR-10/100 and ImageNet datasets using ResNet architectures and demonstrate that it gives improved results with respect to pruning ratios and test accuracy over layer-only or unit-only pruning and favorably competes with combined unit and layer pruning algorithms requiring pre-trained networks.
\end{abstract}

\begin{keywords}
	Neural networks, Structured pruning, Bayesian model selection, Variational inference
\end{keywords}

\section{Introduction}
Deep neural networks (DNNs) have been successfully employed to solve diverse learning problems such as pattern classification, reinforcement learning, and natural language processing
\cite{girshick_fast_2015, noh_learning_2015, silver_mastering_2017}.
In recent years, it has been widely accepted that large-scale DNNs generally outperform smaller networks when trained correctly.
Although residual architectures such as ResNets \cite{ResNet_he_2016,he_identitymappings2016} with number of layers well into the hundreds have been successfully trained with techniques such as Batch Normalization \cite{batchnorm_ioffe2015}, such large-scale and resource-consuming models are still challenging to train and more so to deploy if the available hardware is limited, e.g., on mobile devices.

To cope with the training and deployment issues facing large DNNs, various pruning methods have been developed to eliminate neural networks structures from such large-scale networks while mostly preserving their predictive performance \cite{han_learning_2015,blalock_what_2020}.
While at first such methods pruned individual weights \cite{han_learning_2015}, in more recent years structured unit pruning \cite{li_pruning_2017, he_soft_2018, liebenwein_lost_2021, SCOP_tang_2020, GUENTER2024_robust} and layer pruning \cite{chen19_shallowing, DBPwang2019, guenter_concurrent} have been found to be more effective. In addition, contributing to the success of pruning methods, it is often found that starting from larger networks and pruning them results in better performance than simply training smaller models from scratch
\cite{blalock_what_2020, frankle_lottery_2019, Louizos_2017, guenter_concurrent}.

However, most existing pruning methods come with certain limitations that make them inconvenient to use in practice.
For example, many methods require pre-trained networks and rely on expensive iterative pruning and retraining schemes or require significant effort in tuning hyperparameters.
Moreover, most pruning methods focus on either unit or layer pruning and existing literature on effectively combining the two is limited.
While in principle a variety of unit and layer pruning methods can be applied successively to obtain a NN that is pruned in both dimensions, it is often unclear to what extent layer pruning, which reduces the sequential workload of computing through layers, is more beneficial than unit pruning, which reduces the parallelizable computations within a layer. For example, and demonstrated by our experiments in Section~\ref{sec:experiments} (Figure~\ref{fig:imagenet_timing} and Table~\ref{tab:final_prediction_time}), focusing on unit pruning can fail to achieve comparable speed-ups to layer pruning for similar pruning ratios when parallelizable GPU computing is used. Moreover, the effect of pruning on reducing the computational load or floating point operations (FLOPS) vs. the number of network parameters and the memory footprint of a NN needs to be distinguished and regulated.

\subsection{Overview of the Proposed Algorithm and Summary of Contributions}
In this work, we combine the unit pruning from \cite{GUENTER2024_robust} and the layer pruning from \cite{guenter_concurrent} to effectively address  the raised difficulties of balancing layer vs. unit pruning as well as FLOPS vs. parameter pruning.
Similar to \cite{GUENTER2024_robust,guenter_concurrent}, we introduce Bernoulli RVs multiplying the outputs of units and layers of the network and use a projected gradient descent algorithm to learn the hyperparameters of their postulated Bernoulli variational posterior distributions.
In this manner, we obtain a novel algorithm for complete structure discovery of DNNs capable of producing a compact sub-network with state-of-the art performance via a single training run that does not require pre-trained weights or iterative re-training.

There are several key contributions in this work distinguishing it from and further developing the previous literature.
While the approaches of \cite{GUENTER2024_robust} and \cite{guenter_concurrent} work well and robustly for a wide range of the hyperparameters of the scale Bernoulli RVs when taken to be common in all layers, it was observed that choosing them layer-wise throughout the network improves performance. However, for deep NNs with many layers tuning these hyperparameters by hand becomes infeasible. To overcome this issue, the algorithm proposed here provides a  novel approach to automatically select such layer-wise regularization parameters based on the expected computational and memory complexity of the neural network. More specifically, we formulate an optimization objective based on a Bayesian approach for trading-off network performance and complexity in which
we relate the expected number of FLOPS as well as the number of parameters of the network to the regularization terms in the objective function involving the
scale RVs parameters of units and layers. This optimization objective explicitly integrates the objectives of prediction accuracy and of combined layer and unit pruning in a complexity-aware manner. Thus, our approach furnishes a framework to determine optimal NN structures from both a performance and computational complexity and size objectives.  The proposed method relies on only three easily interpretable hyperparameters that i) control the general pruning level, ii) regulate layer vs. unit pruning and iii) trade-off computational load vs. the number of parameters of the network. The regularization parameters for units and layers are chosen dynamically during training in terms of these three hyperparameters addressing all previously discussed challenges of combined layer and unit pruning.

Furthermore, the theoretical results in \cite{GUENTER2024_robust, guenter_concurrent} of their respective training/pruning algorithms, which was used to justify pruning conditions in the sense that eliminated units cannot recover with further training, is shown to fail when layer and unit pruning are combined, necessitating a different analysis.
Specifically, in the setting of combined layer and unit pruning convergence needs to be established to a set, representing possible solutions of the optimization problem corresponding to either layer or unit pruning, rather than a single equilibrium point.
Using Lyapunov Stability theory, we prove conditions under which the scale RVs and the weights of a unit or layer of the NN converge to their elimination and can be safely removed
during training (Theorem~\ref{thm:stab_layer} and Theorem~\ref{thm:stab_unit}).
These theoretical results lead to practical pruning conditions implemented in our algorithm. In addition, we show that the resulting pruned networks are always deterministic and therefore extend one of the main results of \cite{guenter_concurrent} to the combined pruning case (Theorem~\ref{thm:deterministic_network}).

The remainder of the paper is organized as follows. We summarize the statistical approach and derive the optimization problem that underpins the proposed pruning method in Section~\ref{sec:Problem_Formulation}.
Section~\ref{sec:cost_aware} presents the main ideas in combining unit and layer pruning to address its challenges and provides the theoretical characterization of the solutions of the optimization problem.
Section~\ref{sec:algorithm} gives a summary of our concurrent learning/pruning algorithm. Section~\ref{sec:convergence_results} provides theoretical results on the convergence properties of the algorithm, with their proofs relegated to the Appendices.
Section~\ref{sec:related_work} reviews the related literature.
In Section~\ref{sec:experiments}, we discuss extensive simulation results with commonly used deep neural network architectures and datasets, and demonstrate the effectiveness of the proposed method over competing approaches.
We give conclusions in Section~\ref{sec:conclusions}.
\newline
\newline
\noindent\emph{Notation:}
We differentiate parameters/variables associated with the $l$th layer of the neural network by using the superscript ``$l$'';  indexing such as $x(n)$ signifies iteration counting; we use $\norm{\cdot}$ exclusively for the Euclidean/Frobenius norm and $v^T/M^T$ for the transpose of a vector/matrix; $v_1\odot v_2$ denotes componentwise multiplication of vectors $v_1$, $v_2$. We denote by $\E[\cdot]$ taking expected value with respect to specified random variables.  Additional notations are defined as needed.

\section{Problem Formulation and Background Material}\label{sec:Problem_Formulation}
We consider a residual network structure obtained by cascading the blocks depicted in Figure~\ref{fig:NN_struct_specific}. The part of the block inside the dashed box can be repeated $M$ times to yield many commonly used structures, such as the various ResNet networks (in such cases, consecutive weights are combined and replaced by a single weight).
The theoretical part of this work assumes for simplicity fully-connected layers as shown and $M=1$, but the results can be readily extended to structures with $M>1$ and convolutional layers as discussed later and demonstrated by our simulations.
This structure results in a network with $L$ residual blocks realizing mappings $\hat y = NN(x;W,\Xi)$ as follows:
\begin{align}\label{eq:struct_res}
\begin{split}
z^1 &=x; \\
\bar z^l&=\xi_2^l\odot h^l(z^l);\\
z^{l+1}&=\xi_B^l\cdot \left[W^l_2 \left(\xi_1^l\odot a^l\left(W^l_1 \bar z^l+b_1^l\right)\right)\right] + W_3^l \bar z^l+b_3^l,\quad
\quad\text{for}\quad l=1, \dots, L;\\
\hat y &= h^{L+1}(z^{L+1}).
\end{split}
\end{align}

\begin{figure}[t!]
	\centering
	\includegraphics[width=0.9\textwidth]{./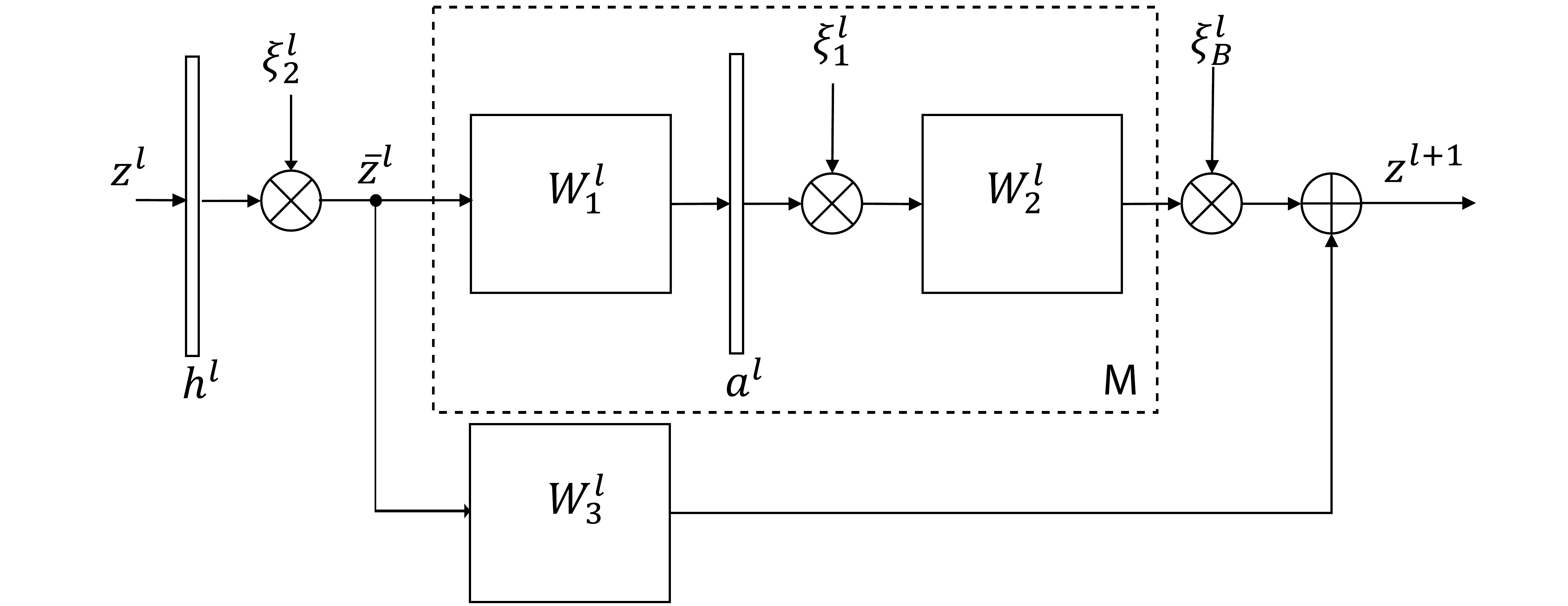}
	\caption{The network structure considered in this work (see \eqref{eq:struct_res}). $z^l$ are input features, $a^l, h^l$ the activation functions and $W_1^l, W_2^l$ and $W_3^l$ network weights. The scalar Bernoulli RV $\xi_B^l$ multiplies the output of the block $W_1^l-a^l-W_2^l$ and the vector valued Bernoulli RVs $\xi_1^l$ and $\xi_2^l$ multiply their corresponding signals element-wise. Bias weights are not shown for simplicity. The part of the block inside the dashed box can be repeated $M$ times to yield many commonly used structures.}\label{fig:NN_struct_specific}
\end{figure}
In our model \eqref{eq:struct_res}, $W_1^l, W_2^l, W_3^l, b_1^l$, $b_3^l$ are weight matrices of appropriate dimensions and $\xi_B^l$ are scalar binary variables taking values either 0 or 1. Similarly, $\xi_j^l$, $j\in\{1,2\}$ are vector valued binary variables with entries either 0 or 1.
We denote collectively  $W_1^l$, $W_2^l$ and $W_3^l$  by $W$, $\xi_B^l$ by $\Xi_B$, and $\xi_j^l$  by  $\Xi_j$ for  $l=1,\ldots,L$ and $j\in\{1,2\}$. Lastly $\Xi_B, \Xi_1, \Xi_2$  are denoted together by $\Xi$. We assume that the element-wise activation functions $a^l(\cdot)$ and $h^l(\cdot)$ are twice continuously differentiable and satisfy $a^l(0)=0$ for $l=1,\dots, L$ and $h^l(0)=0$ for $l=1,\dots, L+1$.
To obtain a non-residual output layer, we can remove the nonlinear path of the last block leading to $\hat y = h^{L+1}(W_3^{L}\bar z_{L}+b_3^L)$. $h^{1}(\cdot)$ is typically taken to be the identity function but it is included in the formulation for added flexibility.
To simplify the notation, we absorb the additive bias vectors $b^l_1$ and $b^l_3$ into the weight matrices $W^l_1$ and $W^l_3$, respectively as an additional last column and append their inputs with a unity last element.

Realizations of $\xi_1^l$ and $\xi_2^l$ implement sub-networks of structure~\eqref{eq:struct_res} by removing the layer units corresponding to the components of $\xi_1^l$ and $\xi_2^l$ equal to 0.
Similarly, realizations of $\xi_B^l$ implement sub-networks of structure~\eqref{eq:struct_res} by removing the nonlinear path in layers with $\xi_B^{l} = 0$; specifically, $\xi_B^{l} = 0$ in block $l$ implies that its nonlinear path is inactive and its output is calculated from the linear path only, which potentially can be absorbed by an adjacent layer thus reducing the number of layers in the network.

In the remainder of this section, we follow the statistical approach of \cite{GUENTER2024_robust, guenter_concurrent} to formulate an optimization problem that allows to learn appropriate weights $W$ together with parameters $\Xi$ in our NN structure~\eqref{eq:struct_res}. The goal is to obtain a highly reduced sub-network defined by the $\Xi$'s not equal to $0$ achieving performance on par with the baseline unreduced network. In \cite{GUENTER2024_robust}, eliminated substructures are individual units only and in \cite{guenter_concurrent}
are layers only, while this work combines both to obtain a complete pruning scheme that optimally trades them off on the basis of computational and memory cost savings as detailed in Section~\ref{sec:cost_aware}.

\subsection{Statistical Modeling}\label{sec:statistical_model}
We consider input patterns $x_i \in \mathbb{R}^m$ and associated output values $y_i \in \mathbb{R}^n$ forming our dataset $\mathcal{D}=\lbrace (x_i,y_i)\rbrace_{i=1}^N$
and assume a statistical model for the data $p(Y\mid X,W,\Xi)$, which typically in the case of regression/classification learning problems is taken to be the Gaussian/crossentropy probability distribution functions (PDFs), respectively. We consider the weights $W$ as Random Variables (RV) and endow them with Normal prior distributions $p(W^l\mid \lambda_W)\equiv \mathcal N \big(0, \lambda_W^{-1} \mathbf{I}\big)$, $l=1,\ldots,L$ with $\lambda_W>0$.  We also consider the parameters $\Xi$ to be RVs: the scalar $\xi_B^l$ take Bernoulli prior distributions denoted by $p(\xi_B^l\mid \pi_B^l)$ with parameters $\pi_B^l$; the $\xi_j^l$ are assumed to be vectors of independent Bernoulli RVs with parameters given by the corresponding components of vectors $\pi_j^l$ and PDFs denoted by $p(\xi_j^l\mid \pi_j^l)$ for $l=1,\ldots,L$, $j\in\{1,2\}$.
We collect $\pi_B^l$ and $\pi_j^l$ for $l=1,\ldots,L$, $j\in\{1,2\}$  in $\Pi_B$  and $\Pi_j$, respectively and $\Pi_B$, $\Pi_j$ together in $\Pi$.

To encourage the convergence of elements of $\Pi$ and corresponding $\Xi$ to zero, and therefore the removal of layer features and whole layers in a manner avoiding
the premature pruning of network substructures \cite{GUENTER2024_robust},
we place {``flattening'' hyper-priors} $p(\pi_B^l \mid \gamma_B)$ and $p(\pi_j^l \mid \gamma_j), \,\, j\in\{1,2\}$  on the variables $\Pi$. The flattening hyper-prior was introduced in \cite{GUENTER2024_robust} and is defined as
\begin{align*}
p(\pi \mid \gamma) = \frac{\gamma-1}{\log\gamma}\cdot\frac{1}{1+(\gamma-1)(1-\pi)}, \quad 0<\gamma<1
\end{align*}
for $\pi=\pi_B^l$ or $\pi_j^l, \, j\in\{1,2\}$ , $l=1,\ldots, L$. A plot of this PDF for select values of the hyperparameter $\gamma$ is shown in Figure~\ref{fig:flatPDF}. We also define $p(\Pi|\Gamma)$ to be the product of such terms over the layers and units of the network with $\Gamma$ collecting the hyperparameters of the flattening PDFs.

\begin{figure}[t!]
	\centering
	\includegraphics[width=0.5\textwidth]{./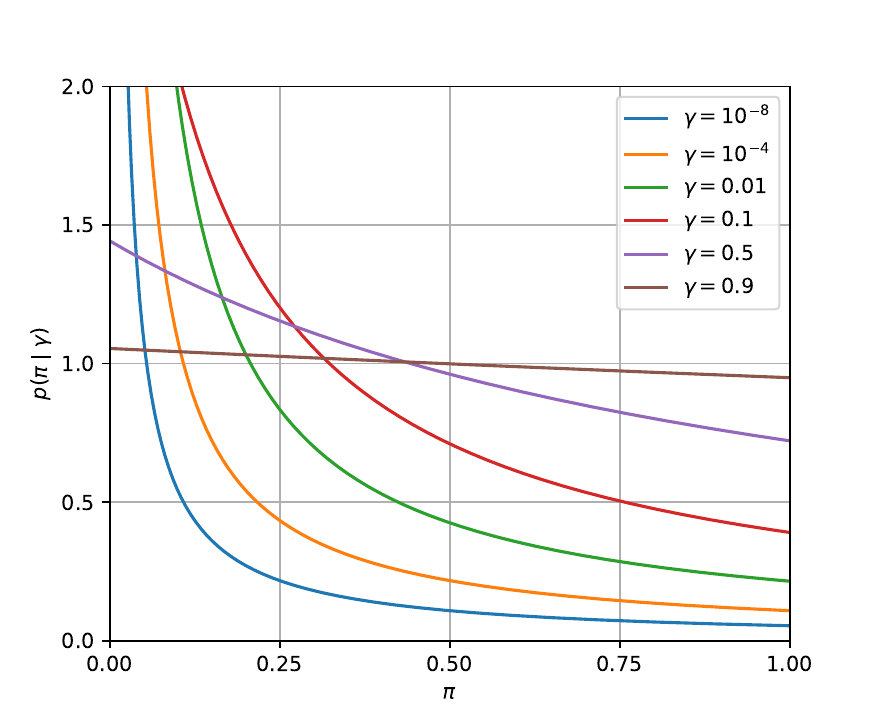}	
	\caption{Flattening PDF for select values of the hyperparameter $\gamma$.}
	\label{fig:flatPDF}
\end{figure}

\subsection{Variational Model Fitting}\label{sec:Variational_Approach}
By combining the NN's statistical model $p(Y\mid X,W,\Xi)$ with the prior distributions on the weights $W$ and parameters $\Xi$ and integrating out $\Xi$,
we arrive at the posterior
\begin{align*}
p(W,\Pi \mid Y,X,\lambda_W,\Gamma)
&\propto p(Y\mid X,W,\Pi)\cdot p(W\mid\lambda_W)\cdot p(\Pi\mid\Gamma).
\end{align*}
Then, we select the parameters $W$ and $\Pi$ via
Maximum a Posteriori (MAP) estimation:
\begin{align}\label{eq:obj_pi}
\max_{W, \Pi} \left[\log p(Y\mid X,W,\Pi) +\log p(W\mid \lambda_W)+ \log p(\Pi\mid \Gamma) \right],
\end{align}
with all $\pi$ in $\Pi$ constrained in $[0,\,1]$. In what follows, $W$ and $\Pi$ will denote point estimates rather than RVs.
We remark that the first term in objective \eqref{eq:obj_pi} promotes NN performance, the second term encourages generalization and the third term aims at reducing the size of the network.
Since $p(Y \mid X,W,\Pi)$ in  \eqref{eq:obj_pi} resulting from integrating out $\Xi$ is intractable,
we replace it with the Variational lower bound ({\it ELBO}) (see e.g. \cite[Chapter 10]{bishop_pattern_2006})
\begin{align*}
\log p(Y\mid X,W,\Pi)
\geq &\int q(\Xi\mid \Theta) \log p(Y\mid X,W,\Xi) \diff \Xi -
\infdiv{q(\Xi\mid \Theta)}{p(\Xi\mid \Pi)},
\end{align*}
where
$q(\Xi\mid\Theta)$ and $p(\Xi\mid\Pi)$ denote the product of Bernoulli variational distributions on the $\xi$ RVs with corresponding parameters $\theta$ collected in $\Theta$ and $\Pi$, respectively and
$KL(\cdot\mid\cdot)$ is the Kullback-Leibler divergence.
Thus, we obtain the maximization objective:
\begin{align}\label{eq:Objective_with_KL}
\begin{split}
\max_{W,\Pi,\Theta} \int q(\Xi\mid \Theta) \log p(Y\mid X,W,\Xi) \diff \Xi & - \infdiv{q(\Xi \mid \Theta)}{p(\Xi\mid \Pi)}
 + \log p(W\mid \lambda_W)  + \log p(\Pi\mid \Gamma),
\end{split}
\end{align}
where all $\pi$ in $\Pi$ and all $\theta$ in $\Theta$ are constrained to be in the interval $[0,\,1]$.
In the spirit of \cite{corduneanu_2001_model_sel} and \cite{GUENTER2024_robust}, we first maximize the main objective \eqref{eq:Objective_with_KL} with respect to $\Pi$ by equivalently minimizing:
\begin{align}\label{eq:maxobj_pi}
\min_\Pi \underbrace{\infdiv{q(\Xi\mid \Theta)}{p(\Xi\mid \Pi)} - \log p(\Pi\mid \gamma)}_{ \equiv J_{tot}(\Pi;\Theta) }.
\end{align}
Since $J_{tot}(\Pi;\Theta)$ in \eqref{eq:maxobj_pi} factorizes over the unit and layer probabilities, we can express $J_{tot}(\Pi;\Theta)=\sum_{(\pi,\theta)} J(\pi;\theta)$ with
\begin{align*}
J(\pi;\theta) = (1-\theta) \log \frac{1-\theta}{1-\pi} + \theta \log\frac{\theta}{\pi} + \log \left(1+(\gamma-1)(1-\pi)\right).
\end{align*}
It was shown in \cite{GUENTER2024_robust} for the flattening hyper-prior that the optimal $\pi$ is given by:
\begin{align}\label{eq:piflat}
\pi^\star = \pi^\star(\theta) = \arg\min_\pi J(\pi;\theta) =
\frac{\gamma\theta}{1+\theta(\gamma-1)},
\end{align}
which satisfies $0\leq \pi^\star\leq 1$ if $0\leq \theta\leq 1$ and also
\begin{align}\label{eq:flattening_J}
J(\pi^\star(\theta);\theta) = \log\gamma - \theta \log \gamma.
\end{align}
The reader is referred to \cite{GUENTER2024_robust} for the derivations of \eqref{eq:piflat} and \eqref{eq:flattening_J} but we remark here that the choice of the flattening hyperprior makes $\frac{\partial J(\pi^\star(\theta);\theta)}{\partial\theta}=-\log\gamma$ independent of $\theta$, i.e., ``flat'' and the pruning process easy to control via the selection of $\gamma$.

Next, we reparametrize $\lambda_W=\lambda\cdot N$ and write the weight regularization term as
\begin{align}\label{eq:W-prior}
\log p(W\mid \lambda_W) = const. -\frac{\lambda\cdot N}{2}W^\top W.
\end{align}
We also express the negative of the integral term in  \eqref{eq:Objective_with_KL} as
\begin{align} 
\E_{\substack{\Xi\sim  q(\Xi\mid \Theta)}}\big[-\log p(Y\mid X,W, \Xi)\big]
\approx N\cdot \E_{\substack{\Xi\sim  q(\Xi\mid \Theta) \\ (x_i,y_i) \sim p(\mathcal{D})}}\left[-\log p(y_i\mid x_i,W, \Xi)\right]= N\cdot C(W,\Theta),
\end{align}
where $p(\mathcal{D})$ represents the empirical distribution, which assigns probability $\frac{1}{N}$ to each sample $(x_i, y_i)$ in the dataset $\mathcal{D}=\lbrace (x_i,y_i)\rbrace_{i=1}^N$
and we defined
\begin{align}\label{eq:Cost}
C(W,\Theta) \equiv \E_{\substack{\Xi\sim  q(\Xi\mid \Theta) \\ (x_i,y_i) \sim p(\mathcal{D})}}\left[-\log p(y_i\mid x_i,W, \Xi)\right].
\end{align}

Finally, we insert \eqref{eq:flattening_J}, \eqref{eq:W-prior} and \eqref{eq:Cost} in \eqref{eq:Objective_with_KL} and after discarding constant terms, normalizing by the dataset size $N$ to express the cost per sample and equivalently shifting from maximization to minimization, we obtain the key form of the optimization problem in this work:
\begin{align}\label{eq:L_train_obj}
\begin{split}
\min_{W,\Theta} \quad  &
C(W,\Theta) + \frac{\lambda}{2}W^\top W -\frac{1}{N}\left[ \sum_{l=1}^{L}\theta_B^l \log\gamma_B^l +\sum_{l=1}^{L}\| \theta_1^l \|_1 \log\gamma_1^l + \sum_{l=1}^{L}\| \theta_2^{l} \|_1 \log\gamma_2^{l}\right] \\[10pt]
\text{subject to} \quad &\Theta\in\mathcal{H}= \left\{\theta_j^l \bigm| 0\leq\theta_j^l\leq 1, \, j\in\{1,2,B\} \,\, ,\forall \,l=1,\dots, L\right\}
\end{split}
\end{align}
with the inequalities in the definition of $\mathcal{H}$ applied element-wise if $\theta_j^l$ is a vector.

\section{Main Results}
\subsection{Complexity-Aware Regularization}\label{sec:cost_aware}
The results of \cite{GUENTER2024_robust} and \cite{guenter_concurrent} show that for deep ResNet \cite{ResNet_he_2016} architectures tuning the $\gamma$ hyperparameter for each layer individually instead of choosing the same value for the whole network may be necessary to obtain a network with state-of-the-art performance in terms of accuracy loss and pruning ratios. However, in deep networks with many layers, tuning ``manually'' individual hyperparameters for each layer becomes computationally unmanageable. In addition, unit pruning and layer pruning affect the structure of the resulting pruned network differently. Specifically, layer pruning tends to create shallow nets with wide layers while unit pruning produces deep nets with narrow layers. When both are in play, it is not clear how to select the hyperparameters controlling the two forms of pruning to strike a balance between the length of the network and the width of its layers in a way leading to optimal network structures. Next, we detail our approach, which addresses both of the above issues.

While minimizing the first term in the cost function \eqref{eq:L_train_obj} improves learning accuracy and keeping the second term small promotes generalization, the function of minimizing the remaining three terms is to achieve some significant reduction in the parameters and computational cost of the network; in particular, the first of these terms addresses the cost due to the number of layers while the remaining two the cost due to the number of units. In the following, we explicitly model parameter complexity and computational load and match it to the regularization terms on $\Theta$ in \eqref{eq:L_train_obj}. In this manner, minimization of these terms optimally balances layer/unit pruning towards a network structure that optimally trades-off learning accuracy versus computational cost and parameter complexity. Moreover, the hyperparameters $\gamma_1^l, \gamma_2^l$ and $\gamma_B^l$ for unit and layer pruning in each layer will be automatically determined as functions of $\Theta$ and only three user-selected hyperparameters, thus alleviating the burden of tuning each layer's $\gamma_j^l$, $j\in\{1,2,B\}$ individually. Furthermore, we show at the end of this section that the resulting form of the $\Theta$ regularization terms under the automatic selection of the $\gamma$ parameters maintains the desirable property shown in \cite{guenter_concurrent} that solutions of \eqref{eq:L_train_obj} are deterministic networks with all $\theta$ variables either at $0$ or $1$.

We first consider the computational cost of one pass through a section of the network depicted in Figure~\ref{fig:NN_struct_specific} for $M=1$. This cost is random depending of the RV's $\xi^l_j$, $j\in\{1,2,B\}$ that determine whether the non-linear path and particular units are active and in turn the effective size of the weights/biases involved. Thus, the effective column and row size of weight $W_1^l$ are given by $\sum_i\xi_{1i}^l$ and $\sum_i\xi_{2i}^l$, respectively, where the RVs $\xi_{ji}^l \sim Bernoulli(\theta_{ji}^l)$ control which input and output units are active during the pass. Then, we express the effective floating point operations (FLOPS) of the multiplication with weight $W_1^l$ as
\begin{align*}
f_1^l\left[ \xi_B^l\cdot\left(\sum_i\xi_{1i}^l\right) \cdot \left(\sum_i\xi_{2i}^l\right)\right],
\end{align*}
where $f_1^l$ is a constant proportionality factor determined by the layer type. More specifically, for a fully connected layer, we take $f_1^l=1$ reflecting the nominal cost of a multiplication/accumulation operation and for a 2D-convolutional layer with output of width $W$ and height $H$ as well as a kernel size of $K\times K$, we use $f_1^l = W\cdot H \cdot K^2$ to capture the total FLOPS of the convolutional operation accounting not only for the dimensions but also for the parameter sharing in the layer. Similarly, we can express the FLOPS due to operating with weights $W_2^l$ and $W_3^l$ by $f_2^l\left[ \xi_B^l\cdot(\sum_i\xi_{1i}^{l+1}) \cdot (\sum_i\xi_{2i}^l)\right]$ and $f_3^l(\sum_i\xi_{2i}^l) \cdot (\sum_i\xi_{2i}^{l+1})$, respectively, where the constants $f_2^l$, $f_3^l$ are also determined by the type of the layer used. We also consider a cost $f_a^l\xi_B^l\sum\xi_{1i}^l$ to capture any FLOPS associated with adding the bias $b_1$ and evaluating the activation function $a^l(\cdot)$ in \eqref{eq:struct_res}. Thus, the total computational cost of one pass through the section can be expressed as:
\begin{align*}
\xi_B^l \left(f_1^l \sum\xi_{1i}^l \cdot \sum\xi_{2i}^l + f_a^l\sum\xi_{1i}^l + f_2^l\sum\xi_{1i}^l \cdot \sum\xi_{2i}^{l+1}\right) + f_3^l\sum\xi_{2i}^l\cdot \sum\xi_{2i}^{l+1}.
\end{align*}

Since the RV $\xi$ are independent, $\sum\xi_{1i}^l$ and $\sum\xi_{2i}^l$ have Poisson binomial distributions with means $\sum\theta_{1i}^l=\|\theta_1^l\|_1$ and $\sum\theta_{2i}^l=\|\theta_2^l\|_1$, respectively since $\theta_{ji}^l\geq 0$. Then, the expected computational cost is expressed as:
\begin{align}\label{eq:comp_cmpx}
\begin{split}
J^l_F &=   \theta_B^l \left(f_1^l \norm{\theta_1^l}_1 \cdot \norm{\theta_2^l}_1 + f_a^l\norm{\theta_1^l}_1 + f_2^l\norm{\theta_1^l}_1 \cdot \norm{\theta_2^{l+1}}_1 \right) + f_3^l\norm{\theta_2^l}_1\cdot \norm{\theta_2^{l+1}}_1.	
\end{split}
\end{align}
In \eqref{eq:comp_cmpx}, we fix $\theta_2^1=[1,\, \dots,\, 1]^T, \,\theta_2^{L+1}=[1,\, \dots,\, 1]^T$ and $\theta_B^{L}=0$ to model the FLOPS of the first ($l=1$) and last $l=L$ layers of the network.

Next, we consider model parameter complexity, which relates to the memory requirements for implementing the network. We notice that the effective number of parameters also depends on the effective row and column sizes of the weights. Then, a parallel analysis gives the expected number of active parameters in the $l$th block as:
\begin{align}\label{eq:param_cmpx}
\begin{split}
J^l_P &=  \theta_B^l \left(p_1^l \norm{\theta_1^l}_1 \cdot \norm{\theta_2^l}_1 + p_a^l\norm{\theta_1^l}_1 + p_2^l\norm{\theta_1^l}_1 \cdot \norm{\theta_2^{l+1}}_1 \right) + p_3^l\norm{\theta_2^l}_1\cdot \norm{\theta_2^{l+1}}_1
\end{split}
\end{align}
with layer-type dependent constants $p_1^l,p_2^l$ and $p_3^l$. More specifically, we take for fully-connected layers $p^l=1$ and for convolutional layers $p^l=K^2$, where $K$ is the kernel size. Also, $p_a^l=1$ since the number of parameters in the $l$th layer due to the biases equals $\|\theta_1^l\|_1$.
We note that in the case of a fully-connected NN, the number of parameters is identical to the number of FLOPS computed according to \eqref{eq:comp_cmpx} with $f_a^l=1$.
Further we remark that when $h^l(\cdot)$ in our structure \eqref{eq:struct_res} is the identity function and the $l$th nonlinear path has been pruned, $W_3^l$ may be absorbed into the weights $W_1^{l+1}$ and $W_3^{l+1}$ for increased computational/parameter savings, which are not reflected in \eqref{eq:comp_cmpx} and \eqref{eq:param_cmpx}. In this case, the above equations overestimate the complexity of the network.
Also, if $W_3^l$ is a fixed identity matrix, i.e., there is no operation in the skip-connection, we can set $f_3^l=0$ and $p_3^l=0$. We do the same for the case of cheap pooling and padding operations often used in ResNet \cite{ResNet_he_2016} architectures.

To effectively trade-off computational vs. parameter complexity and unit vs. layer pruning, we construct a weighted complexity index:
\begin{align}\label{eq:cmpx_measure}
\begin{split}
J_{FP} = \beta\frac{\sum_{l=1}^{L} J^l_F}{F} +(1-\beta) \frac{\sum_{l=1}^{L} J^l_P}{P} + \alpha \frac{\sum_{l=1}^{L}\cdot\theta_B^l}{L}.
\end{split}
\end{align}
In \eqref{eq:cmpx_measure}, the first term gives the expected computational cost of the pruned NN normalized by the total number of FLOPS $F$ of the unreduced NN per presentation; similarly the second term gives the expected number of parameters of the pruned NN normalized by the total number of parameters $P$ of the unreduced NN. After normalization, the computational and parameter measures become comparable quantities in $[0,\, 1]$ and can be traded-off with the coefficient $0\leq \beta\leq 1$. The third term in \eqref{eq:cmpx_measure} adds the expected number of layers in the pruned NN normalized by the number of layers of the unreduced network in $[0,\, 1]$ to make it comparable with the first two terms.  The coefficient $\alpha\geq 0$ can be used to further promote layer pruning vs. unit pruning and to distribute computations and parameters over shallow networks with more units per layer vs. deep networks with fewer units per layer. This choice should depend on whether the used computing hardware can take advantage of the potential for high parallelization in calculating many units of a layer rather than performing sequential operations across many narrower layers.

We can now express the network complexity as $\nu\cdot J_{FP}$ where $\nu>0$ is a proportionality coefficient and associate it with the last three terms in  \eqref{eq:L_train_obj} representing network complexity from a statistical viewpoint. More specifically, we equate
\begin{align} \label{eq:CA_equate}
\nu \cdot  J_{FP} =  -\frac{1}{N}\left[ \sum_{l=1}^{L}\theta_B^l \log\gamma_B^l +\sum_{l=1}^{L}\| \theta_1^l \|_1 \log\gamma_1^l + \sum_{l=1}^{L}\| \theta_2^{l} \|_1 \log\gamma_2^{l}\right]
\end{align}
and write the optimization objective \eqref{eq:L_train_obj} as
\begin{equation} \label{eq:L_train_obj_final}
\begin{split}
	&\min_{W,\Theta} \quad
	\underbrace{C(W,\Theta) + \frac{\lambda}{2}W^\top W + \nu \cdot  J_{FP}}_{\equalhat L(W,\theta)}\\
	&\text{subject to} \quad \Theta\in \mathcal{H},
\end{split}
\end{equation}
with $\mathcal{H}$ from \eqref{eq:L_train_obj} and defining the loss function $L(W,\Theta)$.
We remark that \eqref{eq:L_train_obj} derived from a statistical analysis and \eqref{eq:L_train_obj_final} derived from a complexity analysis are identical provided that \eqref{eq:CA_equate} holds; from \eqref{eq:cmpx_measure} and \eqref{eq:comp_cmpx}, \eqref{eq:param_cmpx}, we can readily obtain that \eqref{eq:CA_equate} is achieved by taking
\begin{align}\label{eq:gamma_choices}
\begin{split}
\log \gamma_B^l & = -\nu \alpha \cdot N \\
\log \gamma_1^l &=-\nu  \left({\theta_B^l \left(f_1^l\norm{\theta_2^l}_1 +
	f_a^l + f_2^l\norm{\theta_2^{l+1}}_1\right)} \frac{\beta}{F} +
{\theta_B^l \left(p_1^l\norm{\theta_2^l}_1 + 1 + p_2^l\norm{\theta_2^{l+1}}_1\right)}\frac{(1-\beta)}{P}\right)\cdot N
\\
\log \gamma_2^l &= -\nu \left( {f_3^l\norm{\theta_2^{l+1}}_1} \frac{\beta}{F}+ {p_3^l\norm{\theta_2^{l+1}}_1}\frac{1-\beta}{P}\right)\cdot N.
\end{split}
\end{align}
It is clear from \eqref{eq:L_train_obj_final} and \eqref{eq:comp_cmpx} that the network complexity can be transparently traded-off in the optimization by tuning the three hyperparameters $\nu$, $\beta$ and $\alpha$.
Specifically, $\nu$ can be used to trade-off learning accuracy vs. the overall pruning level, $\beta$ computational cost vs. parameter complexity, and $\alpha$ layer vs. unit pruning relative levels.
However, notice that $\log\gamma_j^l$, $j\in\{1,2,B\}$ are different for each layer and dynamically adjusted during training based on the evolving cost of the layer relative to its importance in the learning process.
More specifically, considering the gradients of $\nu \cdot J_{FP}$ with respect to $\theta_B^l, \theta_{ji}^l$ shows how each structure in the network is effectively regularized. Due to the specific choices of $\log\gamma_1^l$ and $\log\gamma_2^l$ in \eqref{eq:gamma_choices} and their dependency on $\theta_B^l$, each network block  is implicitly regularized by the number of active units in it, in addition to receiving regularization $\alpha$ from $\log\gamma_B^l$.
On the other hand, individual units receive generally more regularization in more active layers, i.e., when $\theta_B^l$ is closer to $1$.
Moreover, the regularization on units is also higher when many units in neighboring layers are active, for example large $\norm{\theta_2^l}_1$ and $\norm{\theta_2^{l+1}}_1$ lead to a larger $\log\gamma_1^l$.
\newline
\newline
\noindent\emph{Optimal Solutions:}
It was shown in \cite{guenter_concurrent} for a layer-wise dropout network  that the optimal solutions of the layer-wise version of \eqref{eq:L_train_obj} are deterministic networks with either $\theta_B^l=0$ or $1$. Hence, during the initial training phase when the $\theta$ values are in between $0$ and $1$, such a network enjoys the regularizing properties of dropout \cite{Srivastava_2014_Dropout}, while at the end of training when the $\theta$ parameters have converged, a well regularized and optimally pruned deterministic network is obtained.
Here, we extend the result from \cite{guenter_concurrent} to the case of combined layer and unit pruning.

\begin{theorem}\label{thm:deterministic_network}
	Consider the Neural Network \eqref{eq:struct_res} and the optimization problem \eqref{eq:L_train_obj_final}. The minimal values of
	\eqref{eq:L_train_obj_final} with respect to $\Theta$ are achieved at the extreme values $\theta_B^l=0,1$ and $\theta_{ji}^l=0,\, 1,\, j\in\{1,2\}$ and the resulting optimal network is deterministic.
\end{theorem}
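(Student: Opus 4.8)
The plan is to fix the weights $W$ and regard $L(W,\Theta)$ in \eqref{eq:L_train_obj_final} purely as a function of the probability parameters $\Theta$ on the box $\mathcal{H}=\{0\le\theta\le 1\}$, and then to exploit the elementary fact that a function which is \emph{affine in each coordinate separately} (i.e.\ multilinear) attains its minimum over a box at a vertex. The heart of the argument is therefore to establish that each of the three pieces of $L(W,\Theta)$ is multilinear in $\Theta$, after which the vertex conclusion is automatic and identifies the optimum with a deterministic network.

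First I would dispose of the term $\frac{\lambda}{2}W^\top W$, which is constant in $\Theta$ and hence trivially multilinear. Next, for the data term $C(W,\Theta)$ of \eqref{eq:Cost}, I would use that for fixed $W$ and fixed data the quantity $-\log p(y_i\mid x_i,W,\Xi)$ is a deterministic function $g_i(\Xi)$ of the \emph{independent} Bernoulli variables collected in $\Xi$, so that
\[
C(W,\Theta)=\frac{1}{N}\sum_{i=1}^N\sum_{\xi\in\{0,1\}^d} g_i(\xi)\prod_{k}\theta_k^{\xi_k}(1-\theta_k)^{1-\xi_k}.
\]
Each factor $\theta_k^{\xi_k}(1-\theta_k)^{1-\xi_k}$ equals either $\theta_k$ or $1-\theta_k$, hence is affine in $\theta_k$ and free of the other parameters; since distinct factors involve distinct variables, the product, and therefore $C(W,\Theta)$, is multilinear in $\Theta$.

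Third, and this is where I expect essentially all of the bookkeeping, I would verify that the complexity index $\nu\,J_{FP}$ from \eqref{eq:cmpx_measure}, built out of $J_F^l$ and $J_P^l$ in \eqref{eq:comp_cmpx}--\eqref{eq:param_cmpx}, is also multilinear. Writing $\|\theta_j^l\|_1=\sum_i\theta_{ji}^l$ (valid since $\theta_{ji}^l\ge 0$), every summand of $J_{FP}$ is a constant times a product of the form $\theta_B^l\,\theta_{1i}^l\,\theta_{2k}^l$, $\theta_B^l\,\theta_{1i}^l\,\theta_{2k}^{l+1}$, $\theta_B^l\,\theta_{1i}^l$, or $\theta_{2i}^l\,\theta_{2k}^{l+1}$, together with the linear layer term $\alpha\,\theta_B^l/L$. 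The crucial point to check is that \emph{no single scalar variable ever appears twice in the same product}: the two norm factors in each product always carry either different path labels ($\theta_1$ versus $\theta_2$) or different layer superscripts ($l$ versus $l+1$), so no square such as $(\theta_{2i}^l)^2$ is ever produced, even after the block contributions are summed over $l$. Hence each summand, and thus $J_{FP}$, is affine in every individual $\theta$ and multilinear overall.

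Having established multilinearity of $L(W,\cdot)$, I would finish with the standard vertex argument. Holding all coordinates but one fixed, $L=a\,\theta_k+b$ is affine in $\theta_k$, so its minimum over $[0,1]$ is attained at $\theta_k=0$ when $a\ge 0$ and at $\theta_k=1$ when $a\le 0$; in either case moving $\theta_k$ to that endpoint does not increase $L$. Applying this coordinate by coordinate drives $\Theta$ to a vertex of $\mathcal{H}$ without ever increasing the objective, so the minimal value over $\mathcal{H}$ is attained at a point with every $\theta_B^l\in\{0,1\}$ and every $\theta_{ji}^l\in\{0,1\}$. At such a $\Theta$ each Bernoulli RV in $\Xi$ has parameter $0$ or $1$ and is therefore degenerate, which is precisely the assertion that the resulting optimal network is deterministic.
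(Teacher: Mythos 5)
Your proof is correct and follows essentially the same route as the paper: both establish that $C(W,\Theta)$ and $J_{FP}$ are multilinear in $\Theta$ (the paper via the same expansion of $C$ over realizations of $\Xi$ weighted by products of Bernoulli factors) and then conclude that the minimum over the box $\mathcal{H}$ is attained at a vertex by freezing all but one coordinate and using affineness. Your coordinate-by-coordinate descent to a vertex and the paper's contradiction argument are interchangeable phrasings of the same final step, and your explicit check that no scalar $\theta$ appears twice in any product of $J_{FP}$ is a welcome elaboration of a point the paper only asserts.
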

\begin{proof}
  See \ref{app:proof_deterministic}.
\end{proof}

\subsection{Proposed Algorithm}\label{sec:algorithm}
The proposed concurrent learning and pruning algorithm to minimize the cost function~\eqref{eq:L_train_obj_final} is based on projected stochastic gradient descent. More specifically, we generate sequences \sloppy
$\{W^l_1(n), W^l_2(n), W_3^l(n), \theta_j^l(n)\}_{n\geq 0}$, $j\in\{1,2,B\}$ for  the nonlinear and linear paths:
\begin{align}\label{eq:DE_W}
\begin{split}
W_j^l(n+1) &= W_j^l(n)-a(n)\widehat{\frac{\partial L}{\partial W_j^l}},
\quad l=1,\dots, L, \quad j\in\{1,2,3\} \\
\end{split}
\end{align} and
\begin{align}\label{eq:DE}
\begin{split}
\theta_j^l(n+1) =
\max\left\{\min\left\{\theta_j^l(n)-a(n)\widehat{\frac{\partial L} {\partial \theta_j^l}},\,1\right\}, \,0\right\}, \quad l=1,\dots, L, \quad j\in\{1,2,B\}
\end{split}
\end{align}
ensuring that all $\theta_j^l$ remain in $\mathcal{H}$ and with stepsize $a(n)> 0$.

In the above equations, $\widehat{\frac{\partial L}{\partial W_j^l}}$ and $\widehat{\frac{\partial L} {\partial \theta_j^l}}$ are estimates of the gradients of the objective function $L(W(n),\Theta(n))$ in \eqref{eq:L_train_obj_final} with respect to network weights $W$ and parameters $\Theta$.
More specifically, let
\begin{align}
\hat C(W, \hat \Xi) = - \frac{1}{N}\log p(\hat Y\mid \hat X,W, \hat \Xi)
\end{align}
be a mini-batch sample of $C(W,\Theta)$ from \eqref{eq:Cost} using data $\hat X$, $\hat Y$ and samples $\hat \Xi\sim Bernoulli(\Theta)$, where the latter notation means that each scalar element $\xi\in\Xi$ and corresponding $\theta\in\Theta$ satisfy $\xi\sim Bernoulli(\theta)$.
Then,
\begin{align}\label{eq:W_grad}
\begin{split}
\frac{\partial L(W,\Theta)}{\partial W_j^l} \approx \frac{\widehat{\partial L(W,\Theta)}}{\partial W_j^l} = \frac{\partial \hat C}{\partial W_j^l} + \lambda W_j^l, \quad j\in\{1,2,3\}
\end{split}
\end{align}
and $\frac{\partial \hat C}{\partial W_j^l}$ can be calculated via standard backpropagation through a dropout network.
Further, the estimated gradients of $L(W,\Theta)$ with respect to $\Theta$ are obtained as follows. First, we derive from \eqref{eq:L_train_obj_final}:
\begin{align}\label{eq:Lgrads_theta}
\begin{split}
\frac{\partial L}{\partial \theta_{j}^l} = \frac{\partial C}{\partial  \theta_j^l} +\nu \frac{\partial J_{FP}}{\partial \theta_j^l}, \quad j\in\{1,2,B\}.
\end{split}
\end{align}
Next, we introduce the following notation to express conditioning of the expected value on realizations of the Bernoulli RVs $\xi_B^l$ and $\xi_{ji}^l,\, j=1$ or $2$ and corresponding to the $i$th nonlinear unit in layer $l$:
\begin{align}\label{eq:C_notation}
C_{ji, \hat \xi^l_{ji}}^{l, \hat \xi_B^l} = \E_{\substack{\Xi\sim  q(\Xi\mid \Theta)}}\left[-\frac{1}{N}\log p(Y\mid X,W, \Xi) \mid \xi_B^l = \hat \xi_B^l,\, \xi_{ji}^l = \hat \xi_{ji}^l \right].
\end{align}
Note that in \eqref{eq:C_notation} the samples $\hat\xi_B^l$ and $\hat\xi_{ji}^l$ take values $0$ or $1$; if conditioning on $\xi_{ji}^l$ is omitted,  the subscript notation is also omitted and if conditioning on $\xi_B^l$ is omitted, the value of the sample $\hat\xi_B^l$ is also omitted.
Then, we write $C(W,\Theta)$ in a way exposing its dependence on $\theta_B^l$ and $\theta_{ji}^l$, namely:
\begin{align}\label{eq:C_detail_notation}
\begin{split}
C(W,\Theta) = \theta_B^l C^{l,1} + \left(1-\theta_B^l\right) C^{l,0}
= \theta_B^l \left(\theta_{1i}^lC_{1i,1}^{l,1} +(1-\theta_{1i}^l)C_{1i,0}^{l,1}\right) + \left(1-\theta_B^l\right) C^{l,0},
\end{split}
\end{align}
and also
\begin{align*}
\begin{split}
C(W,\Theta) &= \theta_{2i}^l C_{2i,1}^l + \left(1-\theta_{2i}^l\right) C_{2i,0}^l.
\end{split}
\end{align*}
Differentiating these equations gives:
\begin{align}\label{eq:C_diffs}
\begin{split}
\frac{\partial C}{\partial \theta_B^l}=  C^{l,1} - C^{l,0}, \quad
\frac{\partial C}{\partial \theta_{1i}^l} = \theta_B^l \left(C_{1i,1}^{l,1} -C_{1i,0}^{l,1}\right)\quad \text{and}\quad
\frac{\partial C}{\partial \theta_{2i}^l} = C_{2i,1}^l - C_{2i,0}^l.
\end{split}
\end{align}
In \eqref{eq:C_diffs}, $\frac{\partial C}{\partial \theta_j^l}$  are expressed as a difference of two terms in the expected cost function with specific network structures switched on or off and therefore measure the usefulness of these structures.

 The gradients $\frac{\partial \hat C}{\partial  \theta_j^l}, \,\,j\in\{1,2,B\}$ can be approximated via forward-propagations through the network using samples $\hat \Xi$ as shown in \cite{GUENTER2024_robust} for unit pruning and \cite{guenter_concurrent} for layer pruning.
Such approximations yield unbiased estimates and are of importance for the precise stochastic approximation results and theorems in these works.
However, in networks with many layers and especially many units, this estimation becomes computationally unattractive and as \cite{GUENTER2024_robust, guenter_concurrent} have shown, a first order Taylor approximation of the differences of the expectations $C$ in \eqref{eq:C_diffs} provides a computationally efficient and sufficiently accurate estimator for the proposed pruning algorithms. This approximation is equivalent to the straight-through estimator \cite{bengio_estimating_2013} and is given by

\begin{align*}
\frac{\partial C}{\partial \theta_B^l} \approx  \frac{\partial \hat C}{\partial \hat \xi_B^l}, \quad
\frac{\partial C}{\partial \theta_{1}^l} \approx  \frac{\partial \hat C}{\partial \hat \xi_{1}^l}, \quad
\frac{\partial C}{\partial \theta_{2}^l} \approx \frac{\partial \hat C}{\partial \hat \xi_{2}^l},
\end{align*}
where the gradients of $\hat C$ are computed via mini-batch sampling the data and a sample $\hat \Xi$ via standard backpropagation.
Finally, the derivatives of $ J_F^l$ and $ J_P^l$ with respect to $\theta_j^l$ and $\theta_B^l$ needed in \eqref{eq:Lgrads_theta} are readily obtained from \eqref{eq:cmpx_measure} with \eqref{eq:comp_cmpx} and \eqref{eq:param_cmpx}.

Algorithm~\ref{alg:learning_algo} shows  pseudo-code summarizing the proposed learning and pruning algorithm.
We initialize the process with the dataset $\mathcal{D}=\lbrace (x_i,y_i)\rbrace_{i=1}^N$ and by selecting the starting network in the structure of~ \eqref{eq:struct_res} defined by the number and type of blocks and the width of each layer. We also specify the mini-batch size $B$ and the hyperparameters $\nu > 0, \alpha\geq 0, \beta \in [0,1]$ for our cost-aware pruning regularization and $\lambda>0$ for weight regularization.

During each iteration, the algorithm goes through four steps. In Step~1, we sample a mini-batch sample $\hat X, \hat Y=\lbrace ( x_i, y_i)\rbrace_{i=1}^B$ with replacement from $\mathcal{D}$ and also obtain samples $\hat \Xi \sim Bernoulli(\Theta)$ that are used to estimate the required expectations.
In Step~2, we compute the gradients of the objective \eqref{eq:L_train_obj_final} with respect to the network weights $W$ and $\Theta$ parameters from \eqref{eq:W_grad} and \eqref{eq:Lgrads_theta}, respectively.
In Step~3, learning takes place via stochastic gradient descent using the gradients computed in Step~2 and the weights $W$ and variational parameters $\Theta$ are updated.

In Step~4, we remove the blocks and units with variational parameters that have converged to zero at the end of each epoch during training; by doing so, the cost of further training computations is decreased accordingly.
Theorem~\ref{thm:stab_layer} in Section~\ref{sec:convergence_results} guarantees that the removed structures cannot recover if the weights $W_1^l, W_2^l$ and update rate $\theta_B^l$ of a nonlinear layer path enter and remain in the region $\mathcal{D}_B$ defined in \eqref{eq:RoA_layer}, since convergence of $\{W_1^l\rightarrow 0,\,W_2^l\rightarrow 0,\, \theta_B^l \rightarrow 0\}$ is then assured for this layer.
If the fan-in and fan-out weights as well as the update rate $\theta_{ji}^l$, $j=1$ or $2$ of the $i$th unit of a layer enter and remain in a region $\mathcal{D}_U$ defined in \eqref{eq:RoA_unit}, the unit can be pruned based on Theorem~\ref{thm:stab_unit} in Section \ref{sec:convergence_results}.
However, a more practical pruning condition, justified by these results and which has been found to work well for unit-only \cite{GUENTER2024_robust} or layer-only pruning \cite{guenter_concurrent} is to prune a structure (block or unit) once its $\theta\leq\theta_{tol}$, a user-defined parameter.
Finally, we check for convergence and in the opposite case, we set $n=n+1$ and go through the previous steps.

Although, we develop in detail our algorithm for fully-connected networks, convolutional networks can be trivially handled by introducing Bernoulli random variables $\xi_B^l$ for the collection of output maps of the layer with weights $W_2^l$ and $\xi_1^l, \xi_2^l$ for the individual output maps of the layer with $W_1^l$ and $W_2^l$, respectively. Indeed in Section~\ref{sec:experiments}, the algorithm is also used on convolutional neural networks.

\begin{algorithm}
	\caption{Learning/Pruning Algorithm}
		\label{alg:learning_algo}
		\begin{algorithmic}[.]
			\REQUIRE  Dataset $\mathcal{D}=\lbrace (x_i,y_i)\rbrace_{i=1}^N$ and mini-batch size $B$, Initial Network Structure with Initial Weights $W(0), V(0)$ and parameters $\Theta(0)$;
			hyperparameters:  $\nu>0, \alpha \geq 0, \beta\in [0,1]$, $\lambda>0$, $\theta_{tol}>0$. \vspace{-0.3cm}
			\\ \hrulefill \\
			\STATE set $n=0$, training$=$True
			\WHILE{training}
			\STATE\vspace{-0.25cm}
			\hrulefill \\
			\textbf{STEP 1: Forward Pass}\\
			\STATE Predict the network's outputs $\hat y_i$ using \eqref{eq:struct_res} and samples $\hat X, \hat Y=\lbrace ( x_i, y_i)\rbrace_{i=1}^B$ and $\hat \Xi \sim Bernoulli(\Theta)$.\\
			\hrulefill \\
			\textbf{STEP 2: Backpropagation Phase}\\
			\STATE Approximate the gradients with respect to weights $W$ using \eqref{eq:W_grad} and $\Theta$ using \eqref{eq:Lgrads_theta}\\
			\hrulefill \\
			\textbf{STEP 3: Learning Phase}\\
			\STATE Take gradient steps to obtain $W(n+1)$ and $\Theta(n+1)$ according to \eqref{eq:DE_W} and \eqref{eq:DE}, respectively.\\
			\hrulefill \\
			\textbf{STEP 4: Network Pruning Phase}\\
			\STATE Once every epoch and for all $\theta$ in $\Theta(n+1)$: if $\theta \leq \theta_{tol}$, prune the corresponding structure from the network by setting its weights to zero and removing it from the network.\\	
			\hrulefill \\
			\STATE set $n=n+1$ and determine whether training$=$False
			\ENDWHILE
		\end{algorithmic}
\end{algorithm}

\subsection{Convergence Results and Pruning Conditions}\label{sec:convergence_results}
In this section, we invoke well-established stochastic approximation results to study the convergence properties of the stochastic gradient descent algorithm of Section~\ref{sec:algorithm}.
More specifically, we consider the following projected ODE:
\begin{align}\label{eq:ODE_proj}
\begin{split}
\dot W &= - \frac{\partial L(W,\theta)}{\partial W}\\
\dot \Theta &= -\frac{\partial L(W,\Theta)}{\partial \Theta} - h,
\end{split}
\end{align}
having as states all trainable parameters $W$ and $\Theta$ of the NN and with $L(W,\Theta)$ from \eqref{eq:L_train_obj_final} and the term $h$ accounting for the projection of $\Theta$ to the positive unit hypercube $\mathcal{H}$. Following \cite[Theorem 2.1, p.127]{kushner2003stochastic}, it can be shown under certain conditions (including the selection of the stepsize $a(n)$ to satisfy the Robbins-Monro rules) that any convergent subsequence generated by the stochastic gradient descent algorithm of Section~\ref{sec:algorithm} approaches a solution of the projected ODE \eqref{eq:ODE_proj} and almost surely converges to a stationary set of the ODE in $\mathcal{H}$ and therefore to a stationary point of \eqref{eq:L_train_obj_final}.
Such assumptions have been verified in \cite[Appendix A]{guenter_concurrent}  for layer pruning and can be shown to hold for the  algorithm proposed here, which performs unit and layer pruning at the same time. However, we omit such details and only present the following results about the asymptotic convergence of \eqref{eq:ODE_proj} to solutions corresponding to pruned deterministic networks.

\subsubsection{Domains of Attraction and Pruning}
Next, we analyze the dynamics of the weights and $\theta$ parameters of specific structures (blocks and units) of the network in more detail. More specifically, we derive conditions under which such weights and $\theta$ parameters are assured to converge to zero and the corresponding structures can be safely pruned from the network.
Under the assumption that the network does not collapse completely, i.e., some information is always propagated, we have that $\norm{\theta_2^l}_1 \neq 0$. A consequence of this assumption is that $\log\gamma_2^l<0$ in \eqref{eq:gamma_choices} for $\beta \in [0,1]$. For $\log\gamma_2^l<0$, the stability results of \cite{GUENTER2024_robust} for unit pruning are readily extended to the dynamics of $\theta_{2i}^{l+1}$ and its corresponding fan-in weights in $W_2^l$ and $W_3^l$ as well as its fan-out weights in $W_1^{l+1}$ and $W_3^{l+1}$. As discussed in \cite{GUENTER2024_robust}, this leads to a practical pruning condition for a unit when its $\theta_{2i}^l$ is small.

Therefore, we focus on the dynamics of $\theta_{1i}$, $\theta_B$ parameters and associated weights.
Due to the combination of layer and unit pruning considered here, the convergence results and corresponding pruning conditions derived in \cite{GUENTER2024_robust} for unit pruning and \cite{guenter_concurrent} for layer pruning fail to apply. This is because the effective regularization on $\theta_B^l$ or $\theta_1^l$ vanishes if either $\norm{\theta_1^l}_1=0$ or $\theta_B^l=0$, respectively, as can be seen from \eqref{eq:L_train_obj} and \eqref{eq:gamma_choices} (or from \eqref{eq:L_train_obj_final} with \eqref{eq:cmpx_measure}, \eqref{eq:comp_cmpx}, \eqref{eq:param_cmpx}). In addition, unlike with  \cite{GUENTER2024_robust} and \cite{guenter_concurrent} where $\log\gamma_B,\log\gamma_1,\log\gamma_2$ are hyperparameters chosen explicitly and remain fixed, here they are determined implicitly and vary dynamically as in \eqref{eq:gamma_choices}. Thus, a different and more careful analysis of the dynamics of $\theta_B^l$ and $\theta_1^l$ than the respective ones in \cite{guenter_concurrent}, \cite{GUENTER2024_robust} is necessary.
In the following, we give two stability theorems establishing the attractive property of certain sets inside of which either a block's $\theta_B^l\rightarrow 0$ or a unit's $\theta_{1i}^l\rightarrow 0$. Convergence to such a set guarantees that a structure of the network will be pruned automatically by the training process. In practice, this leads to simple pruning conditions that can be applied during training as outlined in the Algorithm~\ref{alg:learning_algo} of Section \ref{sec:algorithm}.

We focus on a single block $l$ and drop the superscript $l$ from its parameters  $\theta_B^l, \theta_{1i}^l$ as well as weights $W_1^l,W_2^l$ and cost $C_{ji, \hat \xi^l_{ji}}^{l, \hat \xi_B^l}$ for the sake of clarity. Let $w_{1i}$ be the fan-in weights of the unit with parameter $\theta_{1i}$ and let $w_{2i}$ be the fan-out weights of the same unit.
The following dynamical system derived in \ref{app:dyn_sys_derivation} is a sub-system of \eqref{eq:ODE_proj} describing the dynamics of these variables:
\begin{align}\label{eq:dyn_sys}
\begin{split}
\dot w_{1i} &= -\theta_B\theta_{1i} \nabla_{w_{1i}} C_{1i,1}^{1} - \lambda w_{1i} \\
\dot w_{2i} &= -\theta_B \theta_{1i} \nabla_{w_{2i}} C_{1i,1}^{1} -\lambda w_{2i} \\
\dot \theta_{1i} &= - \theta_B\left[C_{1i,1}^{1}-C_{1i,0}^{1} +R\right]  -h_{1i}, \quad \text{for}\quad  i=1,\dots, K,\\
\dot \theta_B & = - \left[C^{1}-C^{0} + \norm{\theta_1}_1 R\right] -\nu\alpha- h_B,
\end{split}
\end{align}
with
\begin{align}\label{eq:R_Rm}
R  \equalhat
\nu \frac{\partial^2 J_{FP} }{\partial \theta_B \partial \theta_{1i}}
= \nu \left(\frac{\beta}{F}  \left(f_1^l\norm{\theta_2^l}_1+f_a^l+f_2^l\norm{\theta_2^{l+1}}_1\right)
+ \frac{1-\beta}{P} \left(p_1^l\norm{\theta_2^l}_1+1+p_2^l\norm{\theta_2^{l+1}}_1\right)\right)
\end{align}
and $h_{1i},\, i=1,\dots, K$ and $h_B$ projection terms used to keep the $\theta$ parameters in the interval $[0,\, 1]$. Formulas for $h_{1i},\, i=1,\dots, K$ and $h_B$ are stated in \ref{app:dyn_sys_derivation}.
We remark that since $\nu > 0$ and $\beta \in [0,1]$, it is
	\begin{align}\label{eq:Rm_neg_def}
		R \geq R_m >0.
	\end{align}
	with, for example, $R_m =\nu   \left(f_a^l  \frac{\beta}{F}  +\frac{(1-\beta)}{P}\right)$.

Furthermore, we assume:
	\begin{itemize}\hypertarget{item:assumptions}
		\item{(A1)}
		 all weights $W$ of the Neural Network \eqref{eq:struct_res} remain bounded, i.e., it holds for all $n$:
		\begin{align*}
		\norm{W^l_j(n)}<\infty, \, j\in\{1,2,3\},\ \ l=1,\ldots,L,
		\end{align*}
		\item{(A2)} all activation functions used in the Neural Network \eqref{eq:struct_res} are continuously differentiable with bounded derivative $a^{l'}(\cdot)$ and satisfy $a^l(0)=0$,
		\item{(A3)} the data satisfies
		\begin{align*}
		\begin{split}
		\E_{x\sim p(x)}\norm{x}^k  < \infty,\,\, k\in\{1,2,3,4\} \quad
		\text{and} \quad \E_{y\sim p(y|x)}\norm{y}^k  < \infty, \,\, k\in\{1,2\},
		\end{split}
		\end{align*}
	\end{itemize}

Next, we state the two stability theorems justifying layer and unit pruning during training.

\begin{theorem}\label{thm:stab_layer}
	Consider the dynamical system \eqref{eq:dyn_sys} with $\lambda>0$, $\nu > 0$, $\alpha\geq 0$, $\beta \in [0,\,1]$.
	Also, let $x_B=\{W_1,W_2,\theta_B\}$ and consider the Lyapunov candidate function $\Lambda_B(x_B) =\frac{1}{2} \left(\norm{W_1}^2 + \norm{W_2}^2 + \theta_B^2 \right)$ and the set
	\begin{align}\label{eq:RoA_layer}
	\mathcal{D}_B = \left\{W_1,W_2, \theta_B \in[0,1]^K, \theta_1 \in[0,1] \mid \Lambda_B(x_B) \leq \frac{1}{2}\frac{R_m^2}{16(\eta+\kappa)^2}\right\}
	\end{align}	
	with $R_m$ satisfying \eqref{eq:Rm_neg_def}.
	Then under assumptions \hyperlink{item:assumptions}{(A1)-(A3)} and furthermore assuming that the dynamics \eqref{eq:dyn_sys} are piecewise continuous in time and locally Lipschitz in $(x_B, \theta_1)$, uniformly in time, the set
	\begin{align*}
	E_B =  \{x_B\in \mathcal{D}_B,\, \theta_1\in [0,1]^K   \mid W_1=0,\, W_2=0,\, \norm{\theta_1}_1\cdot\theta_B=0\}
	\end{align*}
	is an invariant set in $\mathcal{D}_B$ of the system \eqref{eq:dyn_sys} and locally asymptotically stable (attractive).
	Moreover, $\mathcal{D}_B$ belongs to the region of attraction of $E_B$.
\end{theorem}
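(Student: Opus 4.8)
The plan is to use $\Lambda_B$ as a Lyapunov function on the sublevel set $\mathcal{D}_B$ and to conclude via LaSalle's invariance principle. First I would differentiate $\Lambda_B$ along the dynamics \eqref{eq:dyn_sys}, writing $\|W_1\|^2=\sum_i\|w_{1i}\|^2$ and likewise for $W_2$. Substituting the four vector fields, the weight-decay contributions collect into $-\lambda(\|W_1\|^2+\|W_2\|^2)$, the regularization in the $\theta_B$-equation yields $-\theta_B(\|\theta_1\|_1 R+\nu\alpha)$, and what remains are the indefinite ``cost'' contributions $-\theta_B\sum_i\theta_{1i}\big(w_{1i}^\top\nabla_{w_{1i}}C_{1i,1}^{1}+w_{2i}^\top\nabla_{w_{2i}}C_{1i,1}^{1}\big)-\theta_B(C^{1}-C^{0})$ together with the projection term $-\theta_B h_B$. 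Note that $\Lambda_B$ does not involve $\theta_1$, so the $h_{1i}$ never enter $\dot\Lambda_B$; and by the sign property of the projection, $\theta_B h_B\ge 0$, so $-\theta_B h_B\le 0$ can be discarded.

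The crux is controlling the indefinite cost terms, and here the decisive structural fact is that the nonlinear path output $W_2^l(\xi_1^l\odot a^l(W_1^l\bar z^l))$ vanishes whenever $W_1=0$ or $W_2=0$, because $a^l(0)=0$. Using (A1)--(A3), i.e.\ bounded weights, bounded activation derivatives, and finite input/output moments, I would establish Lipschitz-type bounds showing that $|C^{1}-C^{0}|$ and each inner product $w_{1i}^\top\nabla_{w_{1i}}C_{1i,1}^{1}$, $w_{2i}^\top\nabla_{w_{2i}}C_{1i,1}^{1}$ are bounded by quantities proportional to $\|W_1\|\,\|W_2\|$ (the gradient in the fan-in weights is proportional to $\|W_2\|$, since $w_{1i}$ influences the output only through $W_2$, and the gradient in the fan-out weights to $\|W_1\|$, since the activation $a^l(w_{1i}\bar z^l)$ vanishes at $W_1=0$). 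Absorbing the proportionality constants into $\eta$ and $\kappa$, the total cost contribution to $\dot\Lambda_B$ is bounded in magnitude by $\theta_B(\eta+\kappa)\|W_1\|\|W_2\|$. This is exactly where the radius in \eqref{eq:RoA_layer} enters: the bound $\Lambda_B\le\tfrac12 R_m^2/(16(\eta+\kappa)^2)$ on $\mathcal{D}_B$ forces $\theta_B$ and the weight norms small enough that this bilinear term is dominated by $-\lambda(\|W_1\|^2+\|W_2\|^2)$ together with the strictly positive regularization $R\ge R_m$ from \eqref{eq:Rm_neg_def}, yielding $\dot\Lambda_B\le 0$ throughout $\mathcal{D}_B$.

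With $\dot\Lambda_B\le 0$ on the sublevel set $\mathcal{D}_B$, the set is positively invariant and trajectories in it are bounded, so I can invoke LaSalle's invariance principle: every such trajectory converges to the largest invariant set contained in $\{x_B\in\mathcal{D}_B:\dot\Lambda_B=0\}$. I would then characterize this set. Because the domination above is strict away from $W_1=W_2=0$, the condition $\dot\Lambda_B=0$ forces $W_1=W_2=0$; at such points every cost term vanishes exactly and $\dot\Lambda_B$ reduces to $-\theta_B\|\theta_1\|_1 R-\theta_B\nu\alpha-\theta_B h_B$, a sum of nonnegative terms, whose vanishing (using $R\ge R_m>0$) forces $\|\theta_1\|_1\theta_B=0$; this identifies the limiting set with $E_B$ (and with its subset $\theta_B=0$ when $\alpha>0$). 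Finally I would verify that $E_B$ is itself invariant: on $\{W_1=W_2=0\}$ the weight equations keep the weights at zero, while the projected $\theta$-dynamics maintain $\|\theta_1\|_1\theta_B=0$ — if $\theta_B=0$ the drift $\dot\theta_B\le 0$ holds it there, and if $\|\theta_1\|_1=0$ then each $\dot\theta_{1i}=-\theta_B R-h_{1i}\le 0$ keeps $\theta_{1i}=0$. This gives attractivity of $E_B$ with $\mathcal{D}_B$ in its region of attraction, and Lyapunov stability of the set follows since $\mathcal{D}_B$ is a positively invariant neighborhood on which $\Lambda_B$ is nonincreasing.

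The main obstacle I anticipate is the pair of quantitative estimates of the second step: deriving the bilinear bounds on the expected-cost difference $C^{1}-C^{0}$ and on its weight gradients with explicit, weight-independent constants $\eta,\kappa$ from (A1)--(A3), and then calibrating the radius of $\mathcal{D}_B$ so that these indefinite terms are provably dominated. A second, conceptual difficulty — and precisely the reason the single-equilibrium analyses of the unit-only and layer-only settings do not transfer — is that the attractor is a continuum of equilibria (either $\theta_B\to 0$ or $\|\theta_1\|_1\to 0$, each with $W_1,W_2\to 0$) rather than an isolated point; this is what makes the LaSalle invariance argument, instead of a strict-Lyapunov decrease to a single point, the appropriate tool and necessitates the careful identification of the limiting invariant set carried out above.
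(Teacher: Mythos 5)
Your overall architecture matches the paper's (same Lyapunov candidate, bound the Lie derivative on the sublevel set, conclude convergence to $E_B$, then check invariance), but two steps as you describe them do not close. First, the domination argument. You bound the indefinite cost contribution by $\theta_B(\eta+\kappa)\norm{W_1}\norm{W_2}$, discarding the per-unit factors $\theta_{1i}$, and then propose to dominate this by $-\lambda(\norm{W_1}^2+\norm{W_2}^2)$ ``together with'' the regularization. But every cost term in $\dot\Lambda_B$ carries a factor $\theta_B\theta_{1i}$ (this requires, for the $C^1-C^0$ term, a separate telescoping/conditioning lemma showing $|C^{1}-C^{0}|\leq\sum_i\theta_{1i}\kappa(\norm{w_{1i}}+\norm{w_{2i}})$ — the paper's Lemma~2 — which your ``proportional to $\norm{W_1}\norm{W_2}$'' heuristic does not deliver), and the regularization term $-\theta_B\norm{\theta_1}_1R=-\theta_B\sum_i\theta_{1i}R$ carries the same factors. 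The paper's cancellation is unit-by-unit against $R$: inside $\mathcal{D}_B$ one has $\norm{w_{ji}}\leq\norm{W_j}\leq R_m/(4(\eta+\kappa))$, hence $(\eta+\kappa)(\norm{w_{1i}}+\norm{w_{2i}})-R\leq -R_m/2$, which is exactly what the radius in \eqref{eq:RoA_layer} is calibrated for; the weight decay is never used to absorb the cost terms and survives intact into $W_B$. Your version instead needs, in the regime where $\norm{\theta_1}_1$ is small but $\norm{W_1}\norm{W_2}$ is not, the weight decay alone to dominate roughly $\tfrac{R_m}{8}(\norm{W_1}^2+\norm{W_2}^2)$, i.e.\ a relation $\lambda\gtrsim R_m/8$ between independent hyperparameters that is nowhere assumed. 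Once the $\theta_{1i}$ factors are dropped, the estimate cannot be rescued on the stated $\mathcal{D}_B$.

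Second, the invocation of LaSalle's invariance principle is not justified: that theorem requires an autonomous system, whereas \eqref{eq:dyn_sys} is a sub-system of the full ODE in which the variables of other layers and units enter as time-varying signals — this is precisely why the theorem hypothesizes piecewise continuity in time and Lipschitz continuity uniformly in time. The conclusion ``trajectories converge to the largest invariant set contained in $\{\dot\Lambda_B=0\}$'' is therefore not available. The paper instead applies the non-autonomous result \cite[Theorem 8.4, p.\ 323]{Khalil_nonlinear}: from $\dot\Lambda_B\leq W_B(x_B,\theta_1)\leq 0$ with $W_B=-\lambda(\norm{W_1}^2+\norm{W_2}^2)-\theta_B\norm{\theta_1}_1R_m/2$ continuous and independent of time, one gets $W_B(x_B(t),\theta_1(t))\to 0$, hence approach to $\{W_B=0\}=E_B$; invariance of $E_B$ is then verified separately from the vector field, as you also do. Your closing remarks on why the attractor is a set rather than a point are on target, but the tool you name to exploit that observation is the wrong one for this time-varying setting.
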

\begin{proof}
  See \ref{app:proof_layer}.
\end{proof}

Theorem~\ref{thm:stab_layer} implies that once $\mathcal{D}_B$ is entered, the trajectory generated by \eqref{eq:dyn_sys} tends to $W_1=0, W_2=0$ and either $\norm{\theta_1}=0$ and/or $\theta_B=0$. Then, pruning the layer is justified as soon as $\mathcal{D}_B$ is entered. In practice, a more practical condition based on the previous theoretical result was found to work well: if either $\norm{\theta_1}\leq \theta_{tol}$ or $\theta_B\leq \theta_{tol}$ is observed during training for a small threshold $\theta_{tol}$, we prune the layer.

\begin{theorem}\label{thm:stab_unit}
	Consider the dynamical system \eqref{eq:dyn_sys} with $\lambda>0$, $\nu > 0$, $\alpha\geq 0$, $\beta \in [0,\,1]$.  Also,
	let $x_U=\{w_{1i},w_{2i},\theta_{1i}\}$ and consider the Lyapunov candidate function $\Lambda_U(x_U) =\frac{1}{2} \left(\norm{w_{1i}}^2 + \norm{w_{2i}}^2 + \theta_{1i}^2 \right)$ and the set
	\begin{align}\label{eq:RoA_unit}
	\mathcal{D}_U = \left\{w_{1i},w_{2i}, \theta_B \in[0,1], \theta_{1i} \in[0,1] \mid \Lambda_U(x_U) \leq \frac{1}{2}\frac{R_m^2}{16(\eta+\kappa)^2}\right\}
	\end{align}	
	with $R_m$ satisfying \eqref{eq:Rm_neg_def}.
	Then under assumptions \hyperlink{item:assumptions}{(A1)-(A3)} and furthermore assuming that the dynamics \eqref{eq:dyn_sys} are piecewise continuous in time and locally Lipschitz in $(x_U, \theta_B)$, uniformly in time, the set
	\begin{align*}
	E_U =  \{x_U\in \mathcal{D}_U,\, \theta_B\in [0,1] \mid w_{1i}=0,\,w_{2i}=0,\, \theta_{1i}\cdot\theta_B=0\}
	\end{align*}
	is locally asymptotically stable (attractive).
	Moreover, $\mathcal{D}_U$ belongs to the region of attraction of $E_U$.
\end{theorem}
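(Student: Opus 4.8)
The plan is to carry out the same Lyapunov analysis that establishes Theorem~\ref{thm:stab_layer} in \ref{app:proof_layer}, but with the roles of the layer and the individual unit interchanged. I would regard the sub-system of \eqref{eq:dyn_sys} governing $x_U=(w_{1i},w_{2i},\theta_{1i})$ as non-autonomous, treating the layer variable $\theta_B$ together with the costs $C_{1i,1}^{1},C_{1i,0}^{1}$, their weight-gradients, and the neighbouring norms entering $R$ as exogenous, time-varying signals. Differentiating $\Lambda_U$ along \eqref{eq:dyn_sys} gives
\begin{align*}
\dot\Lambda_U = -\lambda\left(\norm{w_{1i}}^2+\norm{w_{2i}}^2\right)-\theta_B\theta_{1i}\left(w_{1i}^\top\nabla_{w_{1i}}C_{1i,1}^{1}+w_{2i}^\top\nabla_{w_{2i}}C_{1i,1}^{1}+C_{1i,1}^{1}-C_{1i,0}^{1}+R\right)-\theta_{1i}h_{1i}.
\end{align*}
The weight-decay term is non-positive, and the projection term satisfies $\theta_{1i}h_{1i}\ge 0$, so both may be discarded.

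Next I would bound the bracketed factor. Since $a^l(0)=0$ and the bias is absorbed into $W_1^l$, toggling the $i$th unit changes nothing when $w_{1i}=0$ or $w_{2i}=0$; using the boundedness of the weights (A1), the bounded-derivative activations (A2), and the data moments (A3), I would obtain constants $\eta,\kappa>0$ --- the same ones appearing in \eqref{eq:RoA_unit} --- with $\left|w_{1i}^\top\nabla_{w_{1i}}C_{1i,1}^{1}+w_{2i}^\top\nabla_{w_{2i}}C_{1i,1}^{1}\right|\le\eta\left(\norm{w_{1i}}+\norm{w_{2i}}\right)$ and $\left|C_{1i,1}^{1}-C_{1i,0}^{1}\right|\le\kappa\left(\norm{w_{1i}}+\norm{w_{2i}}\right)$, the boundedness of the weights letting one absorb the second factor of the naturally bilinear estimates into the constants. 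Writing $r=\sqrt{2\Lambda_U}$ and using $R\ge R_m>0$ from \eqref{eq:Rm_neg_def}, the bracket is at least $R_m-\sqrt{2}\,(\eta+\kappa)\,r$, and the radius in \eqref{eq:RoA_unit} is chosen exactly so that $r\le R_m/\bigl(4(\eta+\kappa)\bigr)$ keeps this quantity strictly positive on $\mathcal{D}_U$. Hence $\dot\Lambda_U\le -\lambda\left(\norm{w_{1i}}^2+\norm{w_{2i}}^2\right)-\theta_B\theta_{1i}\bigl(R_m-\sqrt{2}(\eta+\kappa)r\bigr)\le 0$ throughout $\mathcal{D}_U$; in particular $\mathcal{D}_U$ is positively invariant in $x_U$, so the bounds persist for all time.

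To conclude I would appeal to Barbalat's lemma rather than LaSalle, because the drift on $\theta_{1i}$ carries the external factor $\theta_B$ and the system is not autonomous. The function $\Lambda_U$ is bounded below and non-increasing, hence convergent, and $\dot\Lambda_U$ is uniformly continuous: the assumed piecewise continuity and local Lipschitz property of \eqref{eq:dyn_sys}, together with (A1)--(A3), bound $\ddot\Lambda_U$ on the invariant set. Therefore $\dot\Lambda_U\to 0$, which forces $w_{1i}\to 0$ and $w_{2i}\to 0$ through the weight-decay term and $\theta_B\theta_{1i}\to 0$ through the strictly positive bracket; this is precisely convergence to $E_U$, so $E_U$ is attractive and $\mathcal{D}_U$ lies in its region of attraction.

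The step I expect to be the main obstacle is handling the non-autonomous character forced by the factor $\theta_B$: because the effective regularization $\theta_B R$ on the unit disappears whenever the surrounding layer switches off, one cannot show $\theta_{1i}\to 0$ in isolation but only the product $\theta_B\theta_{1i}\to 0$, which is exactly why $E_U$ must be a set and not a lone equilibrium and why the Barbalat route (with the accompanying verification that $\dot\Lambda_U$ is uniformly continuous) replaces the classical invariance principle. A secondary, more routine difficulty is making the two linear bounds with constants $\eta,\kappa$ rigorous for the single-unit cost differences, which rests on $a^l(0)=0$ and the moment bounds (A3) exactly as in the proof of Theorem~\ref{thm:stab_layer}.
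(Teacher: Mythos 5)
Your proposal is correct and follows essentially the same route as the paper's proof: the same Lie-derivative computation for $\Lambda_U$, the same linear bounds with constants $\eta,\kappa$ (the paper's Lemma~\ref{lm:bounds}), the radius of $\mathcal{D}_U$ forcing the bracketed coefficient of $\theta_B\theta_{1i}$ below $-R_m/2$, and the correct identification that only the product $\theta_B\theta_{1i}$ can be driven to zero, so the attractor is the set $E_U$. The only cosmetic difference is that you invoke Barbalat's lemma directly where the paper cites \cite[Theorem 8.4]{Khalil_nonlinear}, which is the same argument packaged as a theorem for non-autonomous systems.
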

\begin{proof}
  See \ref{app:proof_unit}.
\end{proof}

Theorem~\ref{thm:stab_unit} implies that once $\mathcal{D}_U$ is entered, the trajectory generated by \eqref{eq:dyn_sys} tends to $w_{1i}=0, w_{2i}=0$ and either $\theta_{1i}$ and/or $\theta_B=0$. Then, pruning the $i$th unit and its fan-in and fan-out weights is justified as soon as $\mathcal{D}_U$ is entered. As for layer pruning, a more practical condition based on the previous theoretical result was found to work well: if either $\theta_{1i}\leq\theta_{tol}$ or $\theta_B\leq\theta_{tol}$ is observed during training for a small threshold $\theta_{tol}$, we prune the unit.
In  case when all $\theta_{1i}$ of a layer are verified to converge to zero, but not $\theta_B$, we can still prune the complete block from the network, since all $w_{1i}$ and $w_{2i}$ and therefore $W_1$ and $W_2$ converge to zero as well.

We remark that the assumption that the dynamics \eqref{eq:dyn_sys} are locally Lipschitz in $(x_B, \theta_1)$, uniformly in time in Theorem~\ref{thm:stab_layer} and Theorem~\ref{thm:stab_unit} holds if in addition to the assumptions \hyperlink{item:assumptions}{(A1)-(A3)} it is required that all activation function $a^l(\cdot), h^l(\cdot)$ of the network \eqref{eq:struct_res} are twice continuously differentiable with a bounded second order derivative. Further, since all time varying terms in \eqref{eq:dyn_sys} are functions of variables of other layers and units that are continuous functions of time, the dynamics \eqref{eq:dyn_sys} are continuous as required in Theorem~\ref{thm:stab_layer} and Theorem~\ref{thm:stab_unit}.

\section{Review of Related Literature}\label{sec:related_work}
\noindent\emph{Pruning Pre-Trained Networks:}
Such pruning methods require pre-trained networks that are expensive to obtain when transfer learning is not applicable. Furthermore, they compress the network by iterating between pruning and fine-tuning or knowledge distillation phases which add additional computational load.

In this group and for layer pruning, \textit{Discrimination based Block-level Pruning} (DBP) \cite{DBPwang2019} and also \cite{chen19_shallowing} identify and remove layers from feed-forward and residual DNNs by analyzing the output features of different layers.
Linear classifiers are trained to generate predictions from  features extracted at different intermediate layers and discover the layers that have the least impact on the network's overall performance.
Other approaches aim to prune the residual blocks of least importance from a NN towards achieving a desired pruning ratio. For example, in \cite{LPSR_zhang}
Batch Normalization layers are used and by setting their factors to zero the preceding block can be effectively eliminated.
The impact to the loss function is approximated using first-order gradients to define an importance score for each residual block; the method prunes the blocks with the lowest score.

A unit pruning method requiring pre-trained networks is for example Filter Thresholding (FT) \cite{li_pruning_2017}; FT forms a score based on the $\mathcal{L}_2$-norm of the fan-out weights from a unit or the kernel matrices in fully-connected or convolutional layers, respectively, and eliminates the units with smallest score to achieve a target sparsity.
SoftNet \cite{he_soft_2018} performs under the same principle using the $\mathcal{L}_1$-norm and ThiNet \cite{luo_thinet:_2017} recursively prunes structures, the elimination of which has the smallest effect in the pre-activation of the following layer to achieve a specified pruning level.

The method of \cite{Beckers_pruning} is based on variational methods and considers the change in variational free energy (variational lower bound, ELBO), when shrinking the prior distribution of each weight of a pre-trained Bayesian NN to zero. If the variational free energy decreases under the new prior, the weight is pruned from the network.
\newline
\newline
\noindent\emph{Simultaneous Training and Pruning:}
These methods carry out the pruning simultaneously with the initial network training and hence do not require pre-trained networks; also, they hold the potential for reducing the computational load of training as well as that of inference.
\cite{sparse_huang2018, xu2020layer, Liu2017learning} introduce scaling factors at the output of specific structures such as neurons, group or residual blocks of the network  and place sparsity regularization in form of the $\mathcal{L}_1$-norm on them; as a result, some of these factors are forced to zero during training and the corresponding structure is removed from the network.
\textit{Network slimming} \cite{Liu2017learning}  uses the scaling factors of BN layers as an importance metric to prune units from the network. Again, $\mathcal{L}_1$-norm regularization is placed on the factors and channels are pruned if the factors are small as determined by a global threshold for the whole network.

Another group of NN pruning approaches is founded on Bayesian variational methods using sparsifying priors
\cite{Louizos_2017,molchanov2017variational, Nalisnick_2015, GUENTER2024_robust,pmlr-v97-nalisnick19a, guenter_concurrent}.
\cite{wright2024bmrs} combines the ideas of learning the variational parameters using log-uniform sparsity inducing priors and the iterative evaluation of changes in variational free energy similar to \cite{Beckers_pruning} for unstructured or structured pruning. Other common such priors are Gaussian scale mixtures defined as a zero mean probability distribution function (PDF) with variance specified by a different RV; an important member of this family is the spike-and-slab prior.
The method of Dropout \cite{Srivastava_2014_Dropout} using a Bernoulli distribution can be thought as implementing a spike-and-slab PDF on the neural network weights \cite{gal2015dropout, Louizos_2017}.
\cite{GUENTER2024_robust} and \cite{guenter_concurrent} employ Bernoulli scale RVs for each unit and layer of the network, respectively and learn their distributions concurrently with the network's weights.

To overcome the challenge of avoiding premature pruning of structures due to a possible poor initialization of the weights and the starting architecture, \cite{GUENTER2024_robust} proposes the ``flattening'' hyper-prior for the parameters of the scale RV.
Further,  it is shown in \cite{guenter_concurrent} that the same hyper-prior leads to an optimization problem with solutions at which all Bernoulli parameters are either $0$ or $1$ in each layer of the network and hence training the NN to obtain such a solution yields an optimally pruned deterministic network.
\newline
\newline
\noindent\emph{Combined Layer and Unit Pruning:}
Existing methods that perform layer and unit pruning concurrently are less available. \cite{accelerate-wang21e} combines the layer pruning method DBP of \cite{DBPwang2019} with the unit-wise \textit{network slimming} method of \cite{Liu2017learning} into an iterative pruning and fine-tuning method that requires a pre-trained NN as input. \cite{accelerate-wang21e} makes the modification to \cite{Liu2017learning} that the pruning threshold for units is set locally in each layer rather than globally and uses  down-sampled input images, effectively modifying the dataset, to further improve the networks efficiency.
\cite{Lin2019TowardsOS} jointly prunes layers and units starting from a pre-trained network and by introducing soft masks to scale the outputs of different structures of the NN. \cite{Lin2019TowardsOS} employs an objective function that also promotes sparsity regularization and aims to align the output of the baseline model with the one of the masked network; the resulting optimization problem is solved via generative adversarial learning techniques  whereby the scaling factors in the soft masks are forced to zero to effect pruning. The methods of \cite{accelerate-wang21e,Lin2019TowardsOS} do not directly address the specific challenges of combined layer and unit pruning brought up earlier in this paper and which motivated the proposed algorithm of Section~\ref{sec:algorithm}.

\section{Simulation Experiments}\label{sec:experiments}
We present simulations results using the proposed algorithm on the ImageNet dataset \cite{imagenet_deng2009}, the CIFAR-100 dataset and the CIFAR-10 dataset \cite{cifar10}. Starting from commonly used deep residual networks such as the ResNet50, ResNet56, and ResNet110 networks, we aim to determine the appropriate number of layers of the network as well as the number of units in each layer concurrently with learning the network weights during training. The goal is to obtain significantly smaller networks with minimally degraded test accuracy when compared to the baseline performance of the trained larger networks.
We fit ResNet50 in our framework by taking $M=2$ in the structure of Figure~\ref{eq:struct_res}, the skip connection weights $W_3^l$ to be either fixed identity or trainable as needed to adjust the dimensionality
and the activation functions $a(\cdot)^l$ and $h^l(\cdot)$ to be ReLU; also, we do not consider any $\xi_2^l$ RVs, i.e., we fix $\theta_{2i}^l=1$. Nevertheless, all $W_1$ and $W_2$ layers within a block have the potential to be reduced by the $\xi_1$ RVs either from the column- or row-side. Resnet56 and ResNet110 are obtained similarly from the structure in Figure~\ref{eq:struct_res} by taking $M=1$.

In each of the following experiments, the networks found after training is completed are evaluated on a separate test dataset; cross-validation to select a network during training is meaningless here since  the network structure keeps evolving.
The resulting networks are evaluated in terms of their accuracy, structure and parameter as well as floating point operations (FLOPS) pruning ratios. We define the parameter pruning ratio (pPR) as the ratio of pruned weights to the number of total weights in the initial network architecture. Likewise, the FLOPS pruning ratio (fPR) is defined as the percentage of FLOPS saved by the network found by the algorithm.
We compare our combined unit and layer pruning method to the methods of \cite{accelerate-wang21e} and \cite{Lin2019TowardsOS}. We note that both of these methods require pre-trained networks and carry out the pruning after the training; therefore, they can not save any computational load during training in contrast to our method. We also compare our method with the unit-only pruning \cite{GUENTER2024_robust} and layer-only pruning \cite{guenter_concurrent} algorithms that have shown to yield competitive results.

Table~\ref{tab:hyperparameters} summarizes common training parameters used for our experiments. Other parameters, such as $\nu, \alpha$ and $\beta$, that are specific to our method are provided in the next subsections for each experiment. All simulations are done in Python using several Pytorch libraries. Every $\xi$ multiplication in \eqref{eq:struct_res} and Figure~\ref{fig:NN_struct_specific} is implemented as a Pytorch \textit{nn.Module} class that can be easily inserted into ResNet implementations adding little extra computation. All experiments were run on a desktop computer using a NVIDIA RTX 4090 graphics processing unit (GPU) and Intel Core i9-13900K central processing unit (CPU).

\begin{table}[t!]
	\caption{Summary of the hyperparameters used for training different ResNet architectures on the ImageNet, CIFAR-10 and CIFAR-100 dataset.} \label{tab:hyperparameters}
	\centering
	\begin{tabular}{cccccc}
		& \makecell{ImageNet\\ResNet50} & \makecell{CIFAR-10\\ResNet56/ResNet110 } &\makecell{CIFAR-100\\ ResNet110 }\\
		\hline
		epochs & $100$&  $182$ & $182$ \Tstrut\\
		batch size $B$ & $180$ & $128$ & $128$\Tstrut\\
		weight regularization $\lambda$ & $\expnumber{1}{-4}$ & $\expnumber{6}{-4}$ & $\expnumber{7.5}{-4}$ \Tstrut\\
		\hline		
		$W$-optimizer&  \makecell{SGD+mom(0.9)}  & \makecell{SGD+mom(0.9)} & \makecell{SGD+mom(0.9)} \Tstrut\\
		learning rate & $0.07$ & $0.1$ & $0.1$\Tstrut\\
		\makecell{learning rate schedule} & \makecell{piecewise constant\\$0.1$@$\{30,60,90\}$}&cosine decay to $0$ & \makecell{cosine decay to $0$\\ with restarts @$\{105,150\}$ }\Tstrut\\	
		\hline
		$\Theta$-optimizer& Adam& Adam & Adam\Tstrut\\
		learning rate & $\expnumber{1}{-3}$ & $\expnumber{2}{-3}$& $\expnumber{5}{-3}$\Tstrut\\
		\makecell{learning rate schedule} & constant & constant &constant\Tstrut\\
		initial $\theta$ & $1.0$&$0.75$&$0.75$\Tstrut\\
		pruning threshold $\theta_{tol}$ & 0.1 & 0.1 & 0.1\\
	\end{tabular}
	\centering

\end{table}

\subsection{ImageNet}
We evaluate our algorithm on ResNet50 having about 25 million parameters and requiring about 4 billion FLOPS per presentation on the ImageNet dataset. Table~\ref{tab:ImageNet_ResNet} summarizes the results for different choices of $\nu, \alpha$ and $\beta$. The parameters are chosen to highlight their effectiveness  in trading-off FLOPS vs. parameter pruning and unit vs. layer pruning.
Generally, increasing $\nu$ leads to higher fPR and pPR as both the computational and parameter complexity are penalized more. For the same $\nu= \expnumber{2.4}{-1}$ and $\alpha=0.2$, choosing $\beta=0$ leads to more aggressive parameter pruning and therefore a higher pPR compared to $\beta=1$. We observe that using a small $\alpha$ does not necessarily lead to any layer pruning. However, increasing $\alpha$ to $0.5$ emphasizes layer pruning more and yields a network with only 38 active layers.
Compared to the layer pruning of \cite{guenter_concurrent}, our method with $\nu=\expnumber{2.4}{-1}, \beta=0.5, \alpha=0.5$ achieves virtually the same test accuracy but with a much higher fPR of over $48\%$ and also a higher pPR of $25\%$ while keeping only one extra layer in the network (38 vs 37 from 50).
In \cite{GUENTER2024_robust}, the best result was achieved by a tedious hand-tuned process for choosing the $\log\gamma_1^l$ parameters differently for each layer in a cost function similar to \eqref{eq:L_train_obj}. In this work, in addition to combining layer and unit pruning, these parameters are automatically selected and our method with a much simpler choice of $\nu=\expnumber{4}{-1}, \beta=0.25, \alpha=0$ outperforms \cite{GUENTER2024_robust}-B in all reported metrics. This network achieves also a fPR of $50\%$, similar to \cite{accelerate-wang21e}, albeit with lower test accuracy. We note that in addition to requiring a pre-trained network, \cite{accelerate-wang21e} not only prunes units and layers but also effectively modifies the dataset by changing the size of the input images to the network, to which perhaps the high test accuracy achieved is owed.
The choice $\nu=\expnumber{3.2}{-1}, \beta=0.5, \alpha=0.3$ in our method improves on the result of \cite{Lin2019TowardsOS} by almost $2\%$ in test accuracy with similar pruning ratios.

\begin{table}[t!]
	\caption{Resulting test accuracy, the number of layers in the network, the FLOPS pruning ratio (fPR) and parameter pruning (pPR) ratio of the pruned networks found with our method and \cite{GUENTER2024_robust, guenter_concurrent, accelerate-wang21e, Lin2019TowardsOS} applied to ResNet50 on the ImageNet dataset.} \label{tab:ImageNet_ResNet}
	\centering
	\begin{tabular}{ccccccccc}
		Method & Test Acc. [\%] & Baseline [\%] &Layers Left & fPR[\%] & pPR[\%]\\
		\hline
		
		$\nu=\expnumber{2.4}{-1}, \beta=0.0, \alpha=0.2$ & 75.75  & 75.93 & 50 &  25.20 & 29.28 \\
		
		$\nu=\expnumber{2.4}{-1}, \beta=1.0, \alpha=0.2$ & 75.02 & 75.93 & 47  & 49.46 & 19.37 \\
		
		$\nu=\expnumber{4}{-1}, \beta=0.25, \alpha=0$ & 74.83  & 75.93 & 50 & 50.27 & 41.47 \\
		
		$\nu=\expnumber{2.4}{-1}, \beta=0.5, \alpha=0.5$ & 74.80 & 75.93 & 38  & 48.46 & 25.33 \\
		
		$\nu=\expnumber{3.2}{-1}, \beta=0.5, \alpha=0.3$ & 73.72 & 75.93 & 38  & 55.21 & 32.20 \\
		
		\vspace{0.1cm}\\
		\hline
		Layer Pruning \cite{guenter_concurrent}	& $74.82$  & $76.04$ & 37 &  26.32 & 21.80 \\
		Unit Pruning \cite{GUENTER2024_robust}-A, 	& $73.53$  & $76.04$ & 50 &  52.81 & 42.13  \\		
		Unit Pruning \cite{GUENTER2024_robust}-B, 	& $74.08$  & $76.04$ & 50 &  44.92 & 37.45  \\		
		
		\vspace{0.1cm}\\
		\hline
		\cite{Lin2019TowardsOS}-0.5 &71.80&76.15 & -& 54.90&24.24\\
		\cite{Lin2019TowardsOS}-1 &69.31&76.15 & -& 72.79&59.95\\
		\cite{accelerate-wang21e}&75.90& -& -&53&50
		
	\end{tabular}
	\centering
	
\end{table}

\begin{figure}[t!]
	\centering
	\includegraphics[width=0.44\textwidth]{./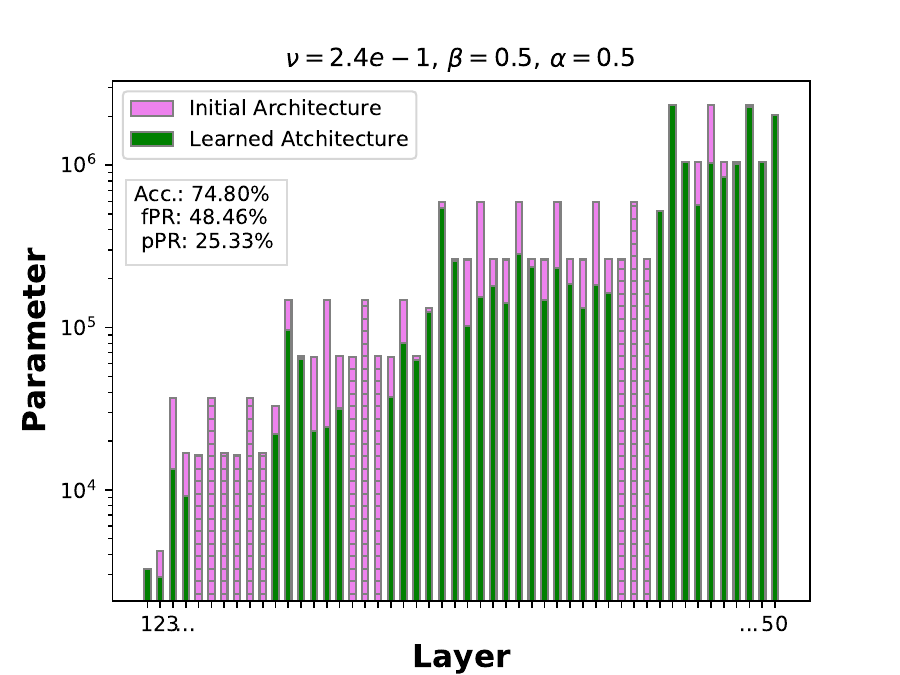}	
	\includegraphics[width=0.44\textwidth]{./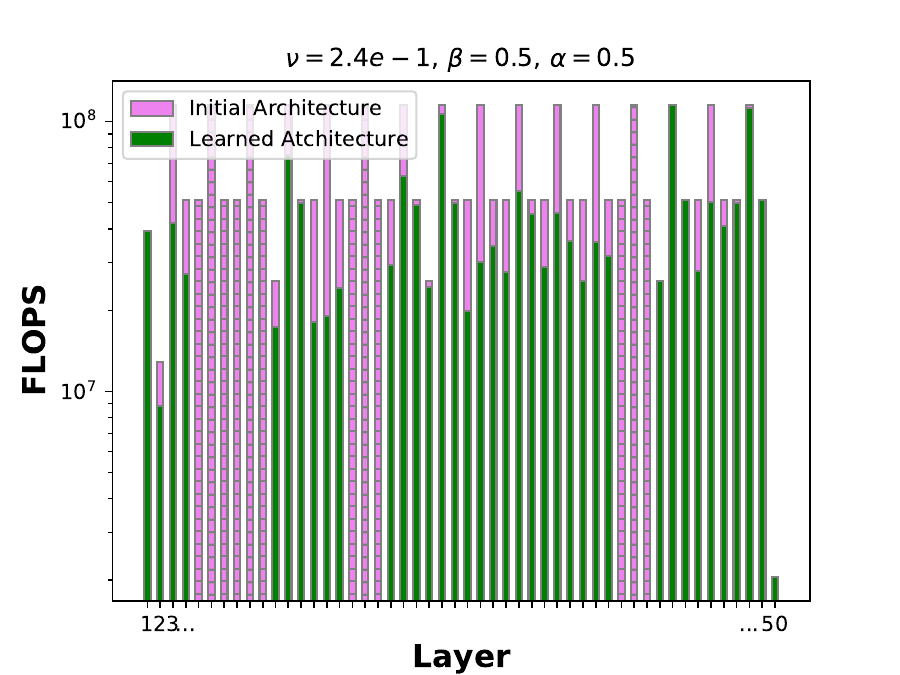}	
	\caption{The number of Parameters and FLOPS in all layers of the learned architecture using our method with hyperparameters $\nu=\expnumber{2.4}{-1}, \beta=0.5, \alpha=0.5$ (green) and of the original ResNet50 architecture (violet) in logarithmic vertical scale. Layers that have been pruned completely are shaded.}
	\label{fig:ImageNet_param_flops}
\end{figure}

Figure~\ref{fig:ImageNet_param_flops} shows the number of parameters and FLOPS in all layers of the baseline network (violet) and the network found using $\nu=\expnumber{2.4}{-1}, \beta=0.5, \alpha=0.5$ (green) in logarithmic scale. If a layer is pruned completely, e.g., because the block's $\theta_B$ is zero, the corresponding bar of the baseline network is shaded. 
Most parameters of ResNet50 are contained in the latter layers of the baseline network, while the computational load is almost uniformly distributed through the layers of the network.

Figure~\ref{fig:ImageNetarchs} shows the number of remaining units on layers where a $\xi_1$ RV is placed and unit pruning occurs, for three hyperparameter choices and for the result in \cite{GUENTER2024_robust}-B. Note that in ResNet50 there are three convolutions per block, but since we placed only two $\xi_1$ RVs (M=2 in Figure~\ref{fig:NN_struct_specific}), not all layers receive such RVs and only the layers that do are shown in this plot.
In the top left plot, $\beta=0$ emphasizes parameter pruning and the network is pruned where most of its parameters are located. The top right plot shows the same network, emphasizing layer pruning, as the one in Figure~\ref{fig:ImageNet_param_flops}.  The lower left plot uses a more balanced $\beta=0.25$ and increases the general pruning level with a higher $\nu$. As a result the front of the network gets pruned as well. In all three cases, our method keeps many units in the layers where the number of unit is increased from its preceding layer. In addition, most of the times the second layer in each block of the network is kept wider than the first  one. This behavior is in contrast to the network found in \cite{GUENTER2024_robust}-B (bottom right) where especially the first two layers with 512 units are pruned heavily, leading to a worse performing network.

\begin{figure}[t!]
	\centering
	\includegraphics[width=0.45\textwidth]{./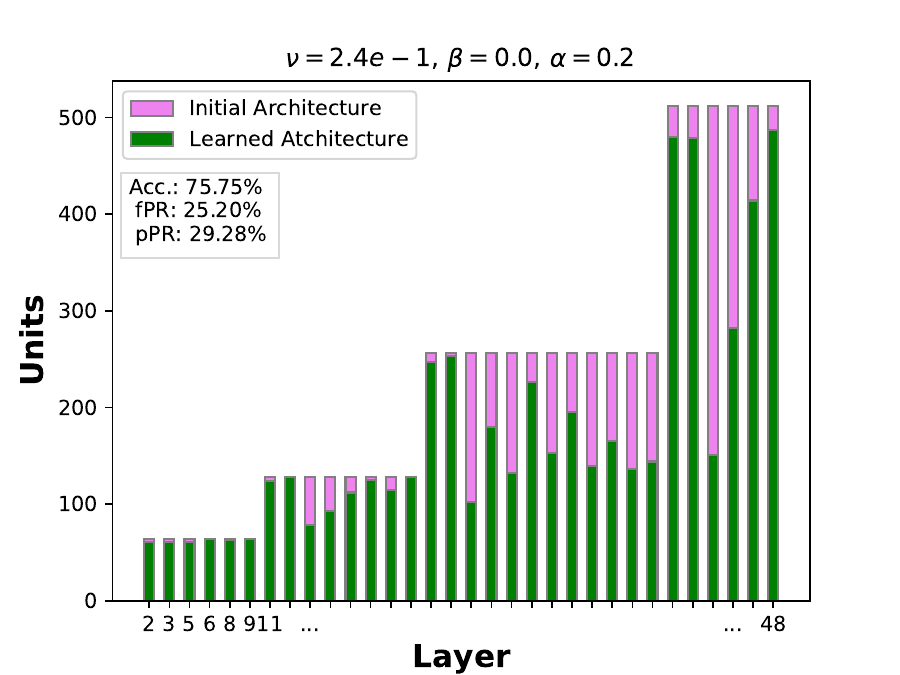}	
	\includegraphics[width=0.45\textwidth]{./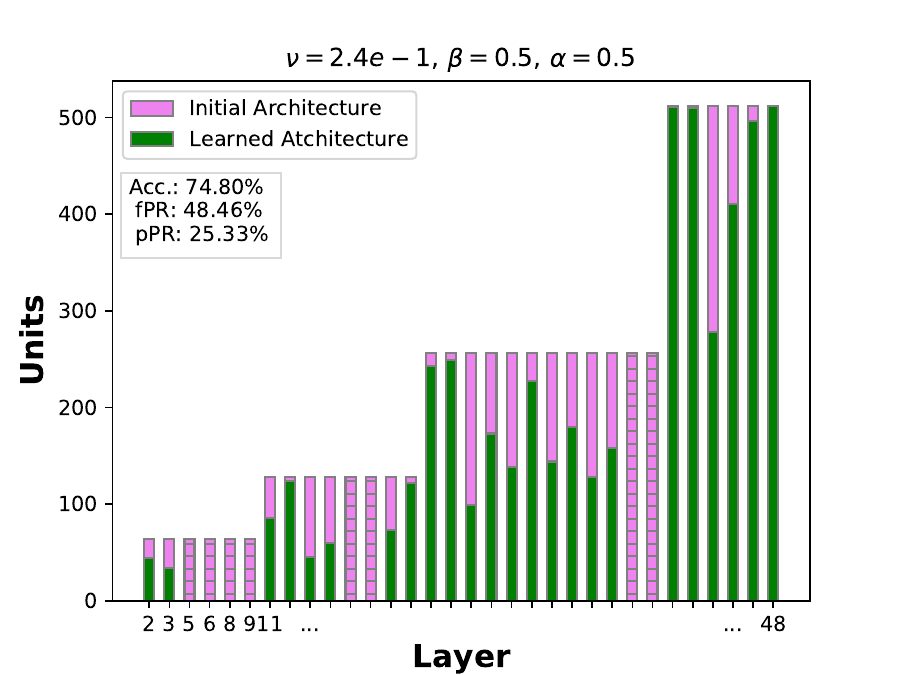}	
	\includegraphics[width=0.45\textwidth]{./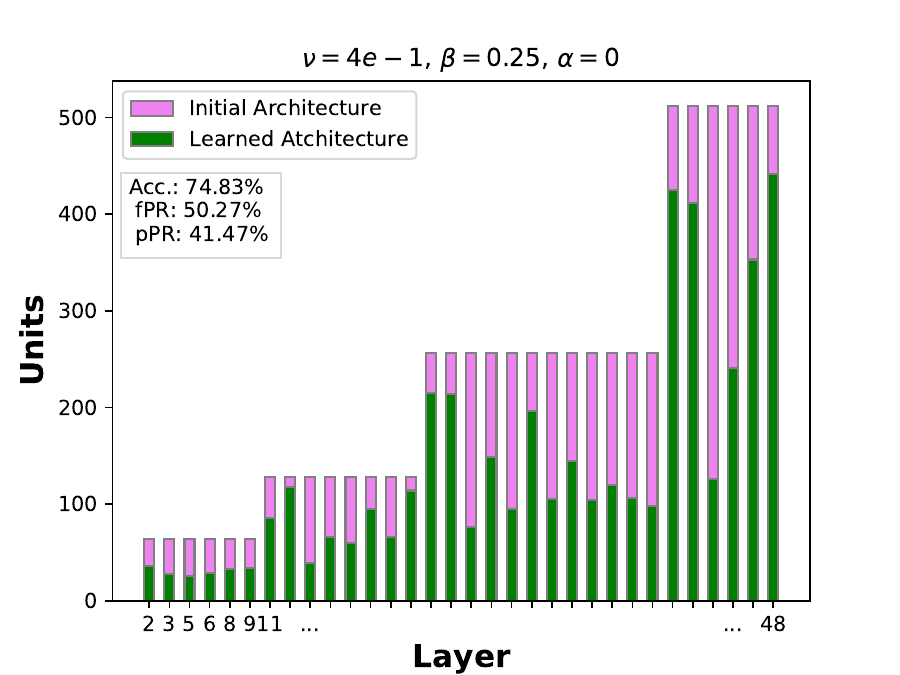}	
	\includegraphics[width=0.45\textwidth]{./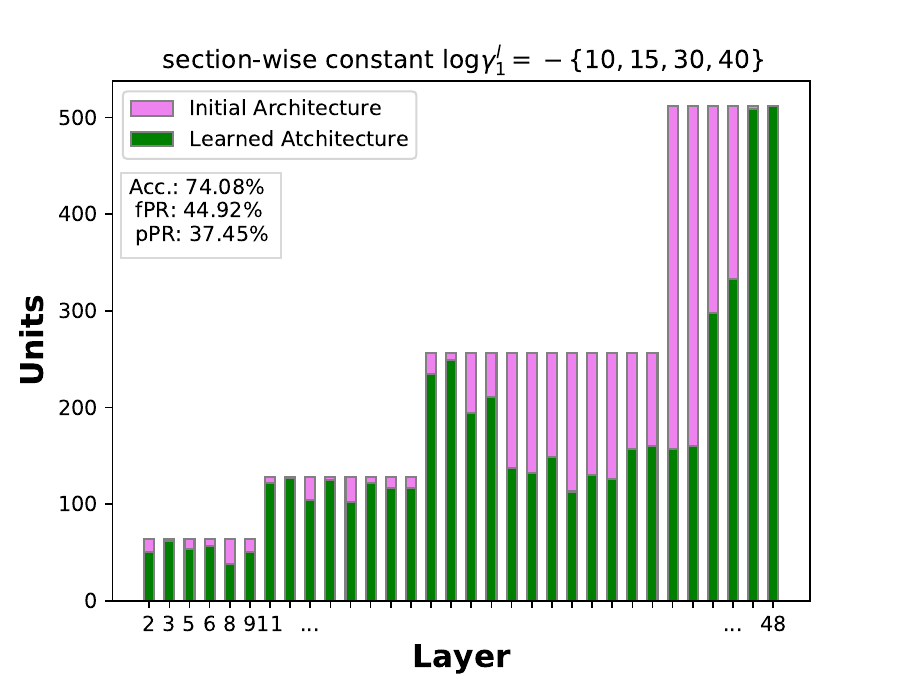}	
	\caption{Top row and bottom left: The number output units of the layers that are multiplied by a $\xi$ RV using our method for different selections of hyperparameters (green) and of the original ResNet50 architecture (violet). Layers that have been pruned completely are shaded. Bottom right: The number output units of the layers of the network found using \cite{GUENTER2024_robust}-B.}
	\label{fig:ImageNetarchs}
\end{figure}

\newpage
Figure~\ref{fig:ImageAccuracy} shows the training (dashed) and test (solid) accuracy of our method for different choices of hyperparameters and the baseline network during the last 50 epochs of training. At epoch 80 all $\Theta$ parameters that have not yet converged to either $0$ or $1$ are rounded to the nearest integer and fixed for the remainder of training. As a result, the training accuracy of the experiments with our method jumps at this epoch.

\begin{figure}[t!]
	\centering
	\includegraphics[width=0.49\textwidth]{./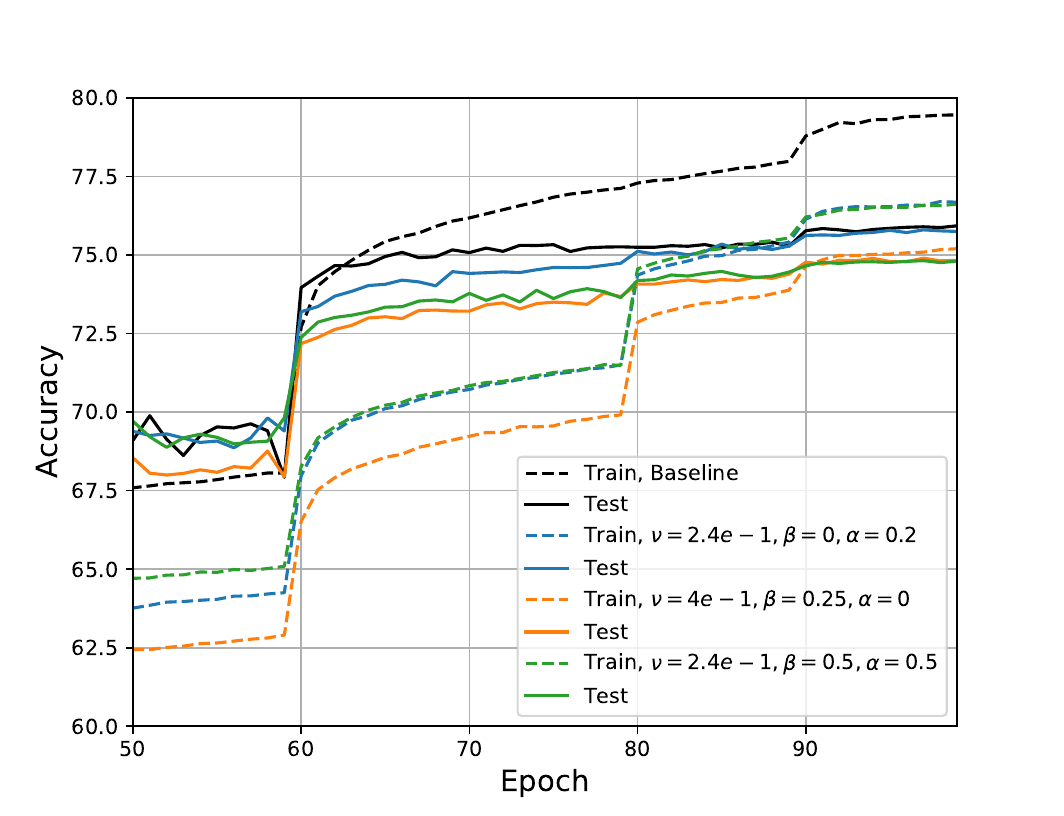}	
	\caption{Test (solid) and train (dashed) accuracy for the ImageNet experiments during the last 50 epochs of training and pruning the ResNet50 architecture with our method and of training the baseline network.}
	\label{fig:ImageAccuracy}
\end{figure}

Table~\ref{tab:final_prediction_time} summarizes the average inference time in milliseconds (after training) over $1000$ batches of size $1$, $16$ and $128$ of various found final architectures with our method and compares them to the baseline ResNet50 model. Using a NVIDIA MX150, a low-power GPU, our found network achieving a test accuracy of $74.80\%$ and reducing the FLOPS of the baseline network by $48.46\%$ ($\nu=\expnumber{2.4}{-1}, \beta=0.5, \alpha=0.5$) decreases the inference time by about $37.6\%$ when using a batch size of $1$. With a low-power CPU such as the Intel Core i5-8250u the inference time is reduced by $35.6\%$. Larger batch sizes of $128$ lead to a decrease of $32.2\%$ in case of the RTX 4090.

\begin{table}[t!]
	\caption{Average inference time over $1000$ batches in milliseconds of various networks found with our method on the ImageNet dataset and ResNet50 as the baseline network. We report times for an NVIDIA RTX 4090 GPU, NVIDIA MX 150 GPU and an Intel Core i5-8250u CPU and for batch sizes of $1$, $16$ and $128$.} \label{tab:final_prediction_time}
	\centering
	\begin{tabular}{c|c|c|c|c|c|c|c|c}
		&&Params(x $10^6$)/&  \multicolumn{3}{c|}{RTX 4090} & \multicolumn{2}{c|}{MX 150} & i5-8250u \\
		&  Layers & FLOPS(x $10^9$) &1 & 16 & 128 & 1 & 16 & 1\\
		\hline
		
		Baseline (ResNet50) &50& 25.56/3.72 &2.03&6.20& 59.25&26.69&234.15 & 149.3 \\
		
		$\nu=\expnumber{2.4}{-1}, \beta=0.0, \alpha=0.2$ & 50& 18.07/2.78&2.18 & 6.42 & 57.81 &22.82 &218.46 & 125.3 \\

		$\nu=\expnumber{4}{-1}, \beta=0.25, \alpha=0$ &50& 14.96/1.85 &2.05  & 5.87 & 51.48 &19.26 & 188.85 & 104.2\\
		
		$\nu=\expnumber{2.4}{-1}, \beta=0.5, \alpha=0.5$ &38& 19.08/1.92 &1.89 & 4.86& 40.19 &16.66 &153.97 & 96.2\\
		
	\end{tabular}
	\centering
	
\end{table}

Figure~\ref{fig:imagenet_timing} depicts the average batch run time with batch size 180 samples during training with our method and the baseline network on the left and the normalized cumulative training time on the right. Our method adds a time-overhead of about $17.9\%$ during the first couple epochs of training when no structures have been pruned and afterwards it successively reduces batch execution time. As a result, most but the least pruned networks take less time to train than the baseline network. For example, with parameters $\nu=\expnumber{2.4}{-1}, \beta=0.5, \alpha=0.5$ achieving a fPR of $48.46\%$, our pruning algorithm saves about $32.6\%$ in cumulative training time when compared to training the baseline network; on the other hand using  $\nu=\expnumber{2.4}{-1}, \beta=0.0, \alpha=0.2$ attaining a fPR of $25.20\%$ increases total training time by only about $10\%$, while maintaining a high test accuracy of $75.75\%$ and achieving considerable inference-time savings when low-power hardware is used (see Table~\ref{tab:final_prediction_time}). We also observe that among obtained networks with a similar fPR, the ones consisting of fewer layers train much faster; this highlights the importance of layer pruning and the introduction of the parameter $\alpha$ and its associated term in \eqref{eq:cmpx_measure} to take advantage of available parallelizable computing hardware. We remark that although the results shown in Figure~\ref{fig:imagenet_timing} depend to a great extent on the computing hardware and the training framework/software, they provide a valid indication of the real-world computational overhead and bottom line savings during training that should be expected from our method. Furthermore, we implemented all multiplications with the $\xi$ variables as individual operation via a Pytorch \textit{nn.Module} class sequentially after each convolutional layer. Unifying the convolutions and $\xi$ multiplications at a lower level could potentially reduce GPU underutilization and the overhead of our method.
Also, it is envisioned that additional savings can be achieved by allowing for dynamically adjusted and increasing batch sizes during training as pruning the network can release a significant amount of GPU memory, not currently exploited in our implementation.

The computational load during training can be also calculated theoretically at any iteration using \eqref{eq:comp_cmpx} and setting all $\theta$ variables that satisfy $\theta>\theta_{tol}$ to $1$, i.e., considering the part of the network that has survived by this iteration (see Algorithm~\ref{alg:learning_algo}). This analysis predicts that networks with a fPR of around $50\%$ save about $40\%$ of the computational cost of the baseline network. However, this saving does not account for the overhead introduced by our method and which was discussed above for the experiments considered.

\begin{figure}[t!]
	\centering
	\includegraphics[width=0.49\textwidth]{./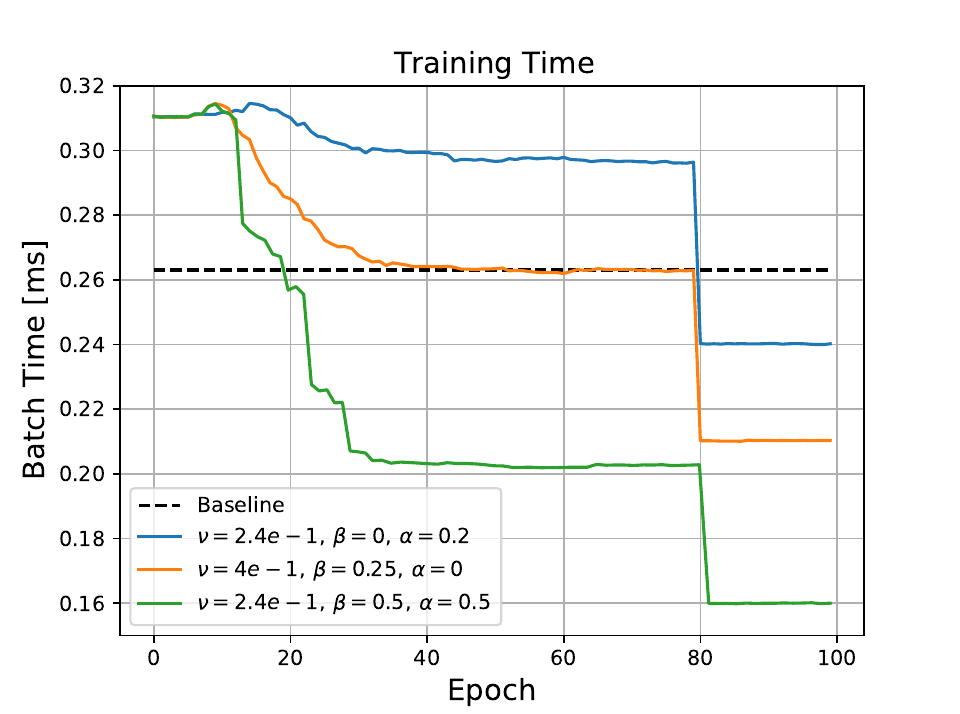}	
	\includegraphics[width=0.49\textwidth]{./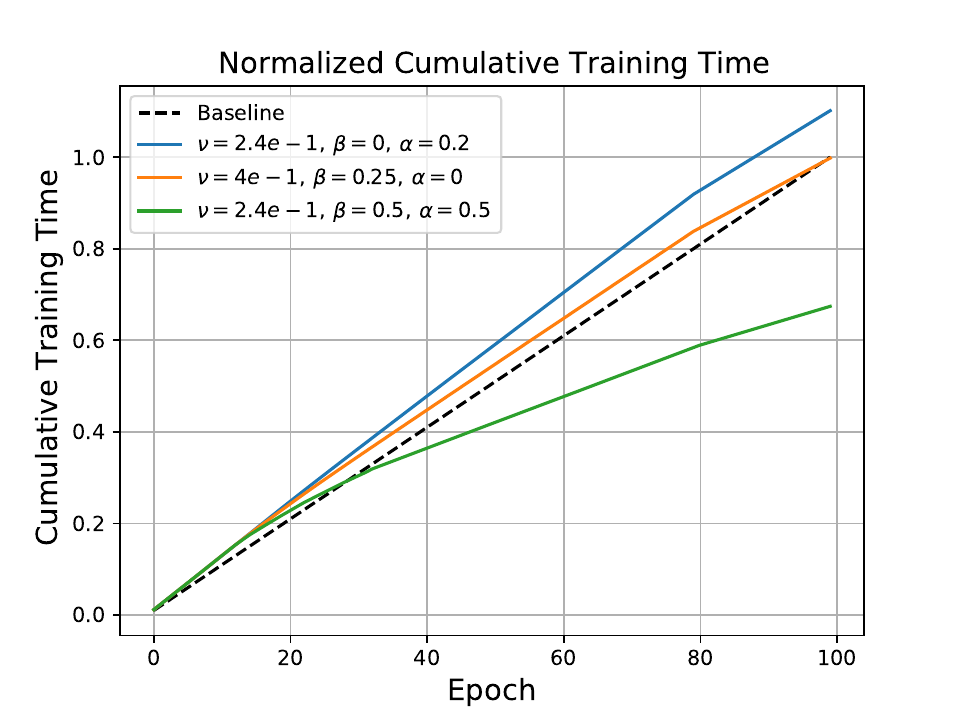}	
	\caption{Left: Average training time per batch during 100 epochs of training ResNet50 with our method on the ImageNet data set and batch size 180 samples. Right: Normalized cumulative training time of our method compared to training the baseline network.}
	\label{fig:imagenet_timing}
\end{figure}

\subsection{CIFAR-100}
We report the results of evaluating our method on the CIFAR-100 dataset using ResNet110 in Table~\ref{tab:CIFAR100_ResNet}. Similar computation (fPR) and parameter (pRP) pruning ratios were achieved in \cite{guenter_concurrent} via layer pruning only after a tedious hand-tuned selection of regularization parameters alike to $\log\gamma_B^l$ in \eqref{eq:L_train_obj} individually for each network block.
In particular, to achieve a higher pPR \cite{guenter_concurrent}-B uses a $5$ times higher regularization parameter in the deeper layers of ResNet110, which hold the most parameters.
The method proposed in this work does not require layer-wise hyperparameters and by choosing $\nu=\expnumber{5.4}{-4}, \beta=0.5,\alpha=0$, the fPR considerably improves over \cite{guenter_concurrent}-B while achieving a higher test accuracy. At the same time the pPR is drastically higher than the one in \cite{guenter_concurrent}-A for a similar accuracy.
We can emulate the layer-only pruning method of \cite{guenter_concurrent} and achieve a fPR of about $46\%$ by taking in \eqref{eq:L_train_obj} $\log\gamma_1^l= \log\gamma_2^l=0$ and $\log\gamma_B^l = -250,\, \forall l=1,\dots, L$.
Using the flexibility of combined layer and unit pruning of the algorithm proposed here, choosing $\nu=\expnumber{1.6}{-1}, \beta=0.5,\alpha=0.1$ improves the fPR to over $48\%$ and also achieves a slightly higher test accuracy.
It is worth noting that from the networks found, the ones with better pruning ratios have more layers than those found by the layer-only pruning method of \cite{guenter_concurrent}; this demonstrates that the proposed method has the potential to discover more balanced structures with better computational properties.

\begin{table}[h!]
	\caption{Resulting test accuracy, the number of layers in the network, the FLOPS pruning ratio (fPR) and parameter pruning (pPR) ratio of the pruned networks found with our method and \cite{guenter_concurrent} applied to ResNet110 on the CIFAR-100 dataset.} \label{tab:CIFAR100_ResNet}
	\centering
	\begin{tabular}{ccccccccc}
		Method & Test Acc. [\%] & Baseline [\%] &Layers Left & fPR[\%] & pPR[\%]\\
		\hline
		
		$\nu=\expnumber{5.4}{-4}, \beta=0.5,\alpha=0$ & 75.20&75.50&102&28.08&22.87\\		
		$\nu=\expnumber{1.6}{-1}, \beta=0.5,\alpha=0.1$ &74.57&75.50& 82 & 48.49 & 44.55\\
		
		\vspace{0.02cm}\\
		\hline
		Layer Pruning \cite{guenter_concurrent}-A	& $75.34$  & $75.50$ & $80$  & $27.99$ & $8.06$ \\
		Layer Pruning \cite{guenter_concurrent}-B	& $75.05$  & $75.50$ & $88$ & $20.06$ & $19.56$ \\			
		\hdashline
        Layer-only pruning by taking&&&&&\\	
		$\log\gamma_1^l= \log\gamma_2^l=0$, $\log\gamma_B^l = -250$& $74.32$  & $75.50$ & $60$ & $46.26$ & $45.05$ \\
        $\forall l=1,\dots, L$\ in \eqref{eq:L_train_obj}&&&&&\\				
		
	\end{tabular}
	\centering

\end{table}

\subsection{CIFAR-10}
We evaluate our method on CIFAR-10 with ResNet56 and ResNet110 to further show its effectiveness and ease of use.
Table \ref{tab:CIFAR10_ResNet56_2} shows the results for ResNet56. We use two different regularization levels of $\nu=\expnumber{8.1}{-2}$ and $\nu=\expnumber{2.2}{-1}$ as well as several choices for $\beta$ to trade-off FLOPS and parameter pruning. Again, computation or parameter pruning can be easily emphasized by varying $\beta$.
Using $\nu= \expnumber{8.1}{-2}, \beta=1.7,\alpha=0$, our method achieves a pPR of 30\%, similar to the layer pruning of \cite{guenter_concurrent}-A, while improving considerably the fPR at the cost of slightly reduced accuracy.
Note that for $\beta=1.7$, we induce a negative term in \eqref{eq:cmpx_measure}. However, due to the specific values of $f_1^l,f_2^l,p_1^l,p_2^l$ in ResNet56 and not considering any $\xi_2^l$ RV, there still exists a constant $R_m$ such that \eqref{eq:Rm_neg_def} holds and the stability results of Section~\ref{sec:convergence_results} remain valid.

\begin{table}[t!]
	\caption{Resulting test accuracy, the number of layers in the network, the FLOPS pruning ratio (fPR) and parameter pruning (pPR) ratio of the pruned networks found with our method and \cite{GUENTER2024_robust, guenter_concurrent, accelerate-wang21e,  Lin2019TowardsOS} applied to ResNet56 on the CIFAR-10 dataset. \label{tab:CIFAR10_ResNet56_2}}	
	\centering
	\begin{tabular}{lclclc}
		Method/Parameter  & Test Acc. [\%]& Bsln. Acc. [\%]&  Layers Left & fPR [\%] & pPR [\%] \\
		\hline	
		$\nu= \expnumber{8.1}{-2}, \beta=1.7,\alpha=0$ &93.40 & 94.26&46 & 48 & 30 \\
		$\nu= \expnumber{8.1}{-2}, \beta=0.5,\alpha=0$ &93.38 &94.26&46 &47 & 50 \\
		$\nu= \expnumber{8.1}{-2}, \beta=0,\alpha=0$ &93.59 &94.26 &50&32 &51 \\
		$\nu= \expnumber{8.1}{-2}, \beta=1, \alpha=0.8$ & 93.07 &94.26&40 & 56 & 55 \\
		\hdashline
		
		$\nu= \expnumber{2.2}{-1}, \beta=1,\alpha=0$& 92.53 &94.26 &34 & 67 & 60 \\
		$\nu= \expnumber{2.2}{-1}, \beta=0,\alpha=0$ & 92.65 & 94.26&42& 51 & 67 \\
		
		\vspace{0.1cm}\\		
		\hline
		Layer Pruning \cite{guenter_concurrent}-A & 93.66 & 94.26& 34 & 39 & 28 \\
		Layer Pruning \cite{guenter_concurrent}-C\ & 92.64 & 94.26&22  & 63 & 46\\
		Unit Pruning \cite{GUENTER2024_robust}  & 92.45 & 94.26&56 & 54 & 84 \\

		\vspace{0.1cm}\\
		\hline
		\cite{accelerate-wang21e}  & 93.76 & 93.69& - & 50 & 40 \\
		\cite{Lin2019TowardsOS}-0.6  & 93.38 & 93.26& 36 & 38 & 12  \\
		\cite{Lin2019TowardsOS}-0.8  & 91.58 & 93.26& 24 & 60 & 66  \\
				
	\end{tabular}
	\centering

\end{table}

The choice $\nu= \expnumber{2.2}{-1}, \beta=1,\alpha=0$ yields a network with similar fPR but much higher pPR when compared to \cite{guenter_concurrent}-C.
In comparison to the unit pruning of \cite{GUENTER2024_robust}, a similar fPR can be achieved using $\nu= \expnumber{2.2}{-1}, \beta=0,\alpha=0$ or when a structure with fewer layers is preferred with $\nu= \expnumber{8.1}{-2}, \beta=1, a=0.8$, which yields a network with only 34 layers but more parameters. We find that unit pruning without combined layer pruning provides the best result in terms of maximal parameter pruning in this experiment.

Our method with the choice $\nu= \expnumber{8.1}{-2}, \beta=0.5,\alpha=0$ leads to the same accuracy as \cite{Lin2019TowardsOS}-0.6 but improves both fPR and pPR considerably, while reducing the computational load of training by a factor of about 1.7. In comparison to the unit and layer pruning of \cite{accelerate-wang21e}, our network achieves a higher pPR at a slightly reduced test accuracy.
We note that \cite{accelerate-wang21e} retrains a pre-trained network requiring additional computational load and perhaps surprisingly yields a pruned network that performs better than its baseline network in this example.

Table~\ref{tab:CIFAR10_ResNet110} summarizes the results using the ResNet110 architecture. In comparison to \cite{Lin2019TowardsOS}, the proposed method achieves higher test accuracies with higher pruning ratios. When compared to \cite{guenter_concurrent}-B, the proposed method with $\nu= \expnumber{2.2}{-1}, \beta=0,\alpha=0$, and hence actively favoring a network with few parameters, achieves a considerably higher pPR for a slight decrease in test accuracy.

\begin{table}[t!]
	\caption{Resulting test accuracy, the number of layers in the network, the FLOPS pruning ratio (fPR) and parameter pruning (pPR) ratio of the pruned networks found with our method and \cite{guenter_concurrent, Lin2019TowardsOS} applied to ResNet110 on the CIFAR-10 dataset.\label{tab:CIFAR10_ResNet110}}	
	\centering
	\begin{tabular}{lclclc}
		Method/Parameter &   Test Acc. [\%]& Bsln. Acc. [\%] & Layers Left & fPR [\%] & pPR [\%] \\
		\hline
		
		$\nu= \expnumber{5.4}{-4}, \beta=0.5,\alpha=0$  &94.55  &94.70 & 90 &41 &44 \\
		$\nu= \expnumber{2.2}{-1}, \beta=1,\alpha=0$ &93.44  &94.70& 56  &77 &74 \\
		$\nu= \expnumber{2.2}{-1}, \beta=0,\alpha=0$  &93.57  &94.70& 65 &64 &78 \\
		
		\vspace{0.1cm}\\
		\hline
		Layer Pruning \cite{guenter_concurrent}-A  &94.25 & 94.70&51  & 54 & 44 \\
		Layer Pruning \cite{guenter_concurrent}-B  & 93.86 & 94.70&41 & 63 & 53 \\
		\vspace{0.1cm}\\
		\hline
		\cite{Lin2019TowardsOS}-0.1 & 93.59 & 93.5&90  & 18.7 & 4.1  \\
		\cite{Lin2019TowardsOS}-0.5  & 92.74 & 93.5&- & 48.5 & 44.8  \\
		
	\end{tabular}
	\centering

\end{table}

\section{Conclusion}\label{sec:conclusions}
We have proposed a novel algorithm for combined unit and layer pruning of DNNs during training that functions without requiring a pre-trained network to apply. Our algorithm optimally trades-off learning accuracy and pruning levels while balancing layer vs. unit pruning and computational vs. parameter complexity using only three user-selected parameters that i) control the general pruning level, ii) regulate layer vs. unit pruning and iii) trade-off computational complexity vs. the number of parameters of the network; these parameters can be chosen to best take advantage of the characteristics of available computing hardware such as the ability for parallel computations. Based on Bayesian variational methods, the algorithm introduces binary Bernoulli RVs taking values either 0 or 1, which are used to scale the output of the units and layers of the network;
then, it solves a stochastic optimization problem over the Bernoulli parameters of the scale RVs and the network weights to find an optimal network structure.
Pruning occurs during training when a variational parameter converges to 0 rendering the corresponding structure permanently inactive, thus saving computations not only at the prediction but also during the training phase. We place the ``flattening'' hyper-prior \cite{GUENTER2024_robust} on the parameters of the Bernoulli random variables and show that the
solutions of the stochastic optimization problem are deterministic networks, i.e., with Bernoulli parameters at either 0 or 1, a property that was first established in \cite{guenter_concurrent} for layer-only pruning and extended to the case of combined unit and layer pruning in this work in Theorem~\ref{thm:deterministic_network}.

A key contribution of our approach is to define a cost function that combines the objectives of prediction accuracy and network pruning in a complexity-aware manner and the automatic selection of the hyperparameters in the flattening hyper-priors. We accomplish this by relating the corresponding regularization terms in the cost function to the expected number of FLOPS and the number of parameters of the network. In this manner, all  regularization parameters, as many as three times the number of layers in the network, are expressed in terms of only three user-defined parameters, which are easy to interpret.  Thus, the tedious process of tuning the regularization parameters individually is circumvented, and furthermore, the latter vary dynamically as needed, adjusting as the network complexity and the usefulness of the substructures they control evolve during training.

We analyze the ODE system that underlies our stochastic optimization algorithm and establish in Theorem~\ref{thm:stab_layer} domains of attraction around zero for the dynamics of the weights of a layer and either its Bernoulli parameter or the Bernoulli parameters of all of its units. Similarly, we establish in Theorem~\ref{thm:stab_unit}, domains of attraction around zero for the dynamics of the weights of a unit and either its Bernoulli parameter of the Bernoulli parameter of its layer. These results provide theoretical support for practical pruning conditions that can be used to eliminate individual units and/or whole layers unable to recover during further training.

Our proposed algorithm is computationally efficient due to the removal of unnecessary structures during training and also because it requires only first order gradients to learn the Bernoulli parameters, adding little extra computation to standard backpropagation. It also requires minimal effort in tuning hyperparameters. We evaluate our method on the CIFAR-10/100 and ImageNet datasets using common ResNet architectures and demonstrate that our combined unit and layer pruning method improves upon layer-only or unit-only pruning with respect to pruning ratios and test accuracy and favorably competes with combined unit and layer pruning algorithms that require a pre-trained network.  In summary, the proposed algorithm is capable of efficiently discovering network structures that optimally trade-off performance vs. computational/parameter complexity.

\appendix
\renewcommand\thesection{\appendixname\ \Alph{section}}
\section{Proof of Theorem~\ref{thm:deterministic_network}}\label{app:proof_deterministic}
\begin{proof}
	We observe that the objective function from \eqref{eq:L_train_obj_final}
	\begin{align*}
		L(W,\Theta) =  C(W,\Theta) + \frac{\lambda}{2}W^\top W+\nu J_{FP}
	\end{align*}
	with is a multilinear function in $\Theta$ as follows.
	It can be easily seen from \eqref{eq:comp_cmpx} and \eqref{eq:param_cmpx} that each $J_F^l$ and each $J_P^l, \, \forall l=1,\dots, L$ is multilinear in $\Theta$. Then, from \eqref{eq:cmpx_measure}, $J_{FP}$ is the sum of these multilinear terms and hence a multilinear function of $\Theta$ itself.
	Moreover, using the fact that all RVs $\Xi$ in the network are Bernoulli distributed, we can write $C(W,\Theta)$ from \eqref{eq:Cost} as
	\begin{align*}
		C(W,\Theta) =&	\frac{1}{N} \sum_I   g_I(\Theta) \log p(Y\mid X,W,\Xi=I)
	\end{align*}
	where
	\begin{align*}
		g_I(\Theta) \equalhat 	\prod_{l=1}^{L} \left(\left({\theta_B^l}\right)^{i_B^l}\left(1-\theta_B^l\right)^{1-i_B^l}  \prod_q \left({\theta_{1q}^l}\right)^{i_{1q}^l}\left(1-\theta_{1q}^l\right)^{1-i_{1q}^l} \prod_q \left({\theta_{2q}^l}\right)^{i_{2q}^l}\left(1-\theta_{2q}^l\right)^{1-i_{2q}^l}\right).
	\end{align*}
	and $I$ denotes a multi-index specified by the values of $i^l_B,i^l_{1q},i^l_{2q}=0\ \text{or}\ 1$, $l=1,\ldots,L$ and $q$ runs over the number of units in each layer.
	Then, $C(W,\Theta)$ is also multilinear in $\Theta$, and we conclude that $L(W,\Theta)$ is a multilinear function of $\Theta$, which is box-constrained, i.e., $0\leq\theta_j^l\leq 1, \, j\in\{1,2,B\}$.
	
	Next, assume that a (local) minimum value of $ L(W,\Theta)$ is achieved for any $0<\theta_B^l<1$ or $0<\theta_{ji}^l<1$.	
	In the case where $0<\theta_B^l<1$ and by fixing all $W$, $\theta_B^k$, $k\neq l$ and $\theta_{ji}^l, \, j\in\{1,2\}$, we obtain a linear function of $\theta_B^l$ from $L(W,\Theta)$. Then, assuming $\frac{\partial L}{\partial\theta_B^l}\neq 0$, $L(W,\Theta)$ can be further (continuously) minimized by moving $\theta_B^l$ towards one of $0$ or $1$, which is a contradiction. If $\frac{\partial L}{\partial\theta_B^l}=0$, $L(W,\Theta)$ does not depend on $\theta_B^l$ and we can replace $\theta_B^l$ with $0$ or $1$, achieving the same minimal value.
	In the case where $0<\theta_{ji}^l<1$ and by fixing all $W$, $\theta_B^l$, $\theta_{h}^l, \, h\neq j$ and $\theta_{jk}^l, \, k\neq i$, we obtain a linear function of $\theta_{ji}^l$ from $L(W,\Theta)$ and repeat the above argument.
\end{proof}

\section{Derivation of the Dynamical System~\eqref{eq:dyn_sys}}\label{app:dyn_sys_derivation}
We focus on the dynamics of $\theta_B^l, \theta_{1i}^l$ as well as weights $W_1^l,W_2^l$ and consider one block $l$ and drop the superscript $l$ from these parameters for the sake of clarity.
	Recall that $C(W,\Theta)$ can be written as in \eqref{eq:C_detail_notation}, exposing its dependence on $\theta_B$ and $\theta_{1i}$.
	Let $w_{1i}$ be the fan-in weights of the unit with parameter $\theta_{1i}$ and let $w_{2i}$ be the fan-out weights of the same unit.
	Then, realizing that $C_{1i,0}^{l,1}$ as well as $C^{l,0}$ are neither functions of the fan-in, nor the fan-out weights,
	the following gradients of $C(W,\Theta)$ are obtained from \eqref{eq:C_detail_notation}:
\begin{align}\label{eq:Cgrads}
	\begin{split}
		\frac{\partial C}{\partial w_{1i}} &=\theta_B\theta_{1i} \nabla_{w_{1i}} C_{1i,1}^{l,1},
		\quad \frac{\partial C}{\partial w_{2i}} = \theta_B \theta_{1i} \nabla_{w_{2i}} C_{1i,1}^{l,1} \\	
		\frac{\partial C}{\partial \theta_{1i}} &= \theta_B (C_{1i,1}^{l,1} - C_{1i,0}^{l,1}),
		\quad \frac{\partial C}{\partial \theta_B} =   C^{l,1} - C^{l,0}.
	\end{split}
\end{align}
The gradients in \eqref{eq:Cgrads} show that $w_{1i}, w_{2i}$ are updated by the product of both $\theta_B$ and $\theta_{1i}$, in contrast to the fan-out weights $w_{3i}^l$ and fan-in weights $w_{3i}^{l-1}$ connected to a unit with $\theta_{2i}$, where only $\theta_{2i}$ would appear instead.

Next we define $h$ in \eqref{eq:ODE_proj}. Let $h$ consist of $h_B$ and $h_{1i}$ such that in each layer:
\begin{align*}
	\dot \theta_B &= -\frac{\partial L(W,\Theta)}{\partial \theta_B}-h_B\\
	\dot \theta_{1i} &= -\frac{\partial L(W,\Theta)}{\partial \theta_{1i}}-h_{1i} \quad \forall i=1,\dots, K.\\
\end{align*}	
The projection term $h_B$ is:
\begin{align}\label{eq:hB_def}
	\begin{split}
		h_B &\equiv \begin{cases}
			-\left[\left[C^{1}-C^{0} +\norm{\theta_{1i}}_1R\right] +\nu\alpha \right]_{+} \quad &\theta_B=0\\
			0 &0<\theta_B<1\\
			\left[\left[C^{1}-C^{0} +\norm{\theta_{1i}}_1R\right] +\nu\alpha \right]_{-} \quad &\theta_B=1
		\end{cases},
	\end{split}
\end{align}
with $\left[\cdot\right]_+$, $\left[\cdot\right]_-$ denoting the positive and negative part of their argument, respectively.
Similarly, the projection term $h_{1i}$ is:
\begin{align}\label{eq:hi_def}
	\begin{split}
		h_{1i} &\equiv \begin{cases}
			-\left[\theta_B\left[C_{1i,1}^{1}-C_{1i,0}^{1} +R\right]  \right]_{+} \quad &\theta_{1i}=0\\
			0 &0<\theta_{1i}<1\\
			\left[\theta_B\left[C_{1i,1}^{1}-C_{1i,0}^{1} +R\right]  \right]_{-} \quad &\theta_{1i}=1
		\end{cases}.\\
	\end{split}
\end{align}

Using the definition of $R$ from \eqref{eq:R_Rm} together with \eqref{eq:Cgrads}, \eqref{eq:hB_def}, \eqref{eq:hi_def}  in \eqref{eq:ODE_proj} yields the dynamical system \eqref{eq:dyn_sys} for each layer.

\newpage
\section{Proof of  Theorem~\ref{thm:stab_layer} and Theorem~\ref{thm:stab_unit}}\label{app:proofs}
We first discuss results needed in the proof of Theorem~\ref{thm:stab_layer} and Theorem~\ref{thm:stab_unit}.
We will make use of the following Lemma:
\begin{lemma}\label{lm:bounds}
	Under assumptions \hyperlink{item:assumptions}{(A1)-(A3)} the following bounds hold:
	\begin{align}\label{eq:bounds}
		\begin{split}
			&(a) \quad\quad | w_{ji}^T \nabla_{w_{ji}} C_{ji,1}^{l,1}| \leq \eta \norm{w_{ji}},  \, j\in\{1,2\}, \\
			\quad \text{and}\quad
			&(b)\quad\quad |C_{1i,1}^{l,1}-C_{1i,0}^{l,1}| \leq \kappa \left( \norm{w_{1i}} + \norm{w_{2i}} \right),\\
		\end{split}
	\end{align}
	where the constants $0 < \eta,\kappa < \infty$ are independent of the remaining weights of the network.
\end{lemma}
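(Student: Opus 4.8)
The plan is to exploit the fact that each conditional cost $C_{1i,1}^{l,1}$ is a smooth function of the unit's weights in which the relevant internal signal depends \emph{linearly} on the weight being differentiated, so that both the directional derivative along that weight (part (a)) and the finite difference across toggling the unit (part (b)) scale with $\norm{w_{ji}}$. I would write the integrand $-\tfrac1N\log p$ as a function of the block output $z^{l+1}$ and recall that, conditioned on $\xi_B^l=1$ and $\xi_{1i}^l=1$, the contribution of the $i$th nonlinear unit to $z^{l+1}$ is $w_{2i}\,a^l(s_i)$ with pre-activation $s_i = w_{1i}^\top \bar z^l$ (bias absorbed). Thus $C_{1i,1}^{l,1}$ depends on $w_{1i}$ only through the scalar $s_i$ and on $w_{2i}$ only through the linear term $w_{2i}\,a^l(s_i)$. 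First I would record the chain-rule expressions for $\nabla_{w_{1i}}$, $\nabla_{w_{2i}}$ and for the finite difference obtained by switching $\xi_{1i}^l$ from $1$ to $0$, before estimating each factor under \hyperlink{item:assumptions}{(A1)--(A3)}.

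For part (a) I would use an Euler-type identity afforded by the linear dependence. For the fan-in weight, since $s_i$ is linear in $w_{1i}$,
\begin{align*}
w_{1i}^\top \nabla_{w_{1i}} C_{1i,1}^{l,1} = \E\!\left[\frac{\partial(-\tfrac1N\log p)}{\partial s_i}\, s_i\right],
\end{align*}
and I would bound $|s_i|\le \norm{w_{1i}}\,\norm{\bar z^l}$ together with $|\partial(-\tfrac1N\log p)/\partial s_i| = |\delta_i\,a^{l\prime}(s_i)|$, where $\delta_i=(\nabla_{z^{l+1}}(-\tfrac1N\log p))^\top w_{2i}$ is the error back-propagated through $w_{2i}$; boundedness of $a^{l\prime}$ (A2), of the weights (A1), and the moment bounds (A3) then give $|w_{1i}^\top \nabla_{w_{1i}} C_{1i,1}^{l,1}|\le \eta\,\norm{w_{1i}}$. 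For the fan-out weight the dependence on $w_{2i}$ is directly linear, so
\begin{align*}
w_{2i}^\top \nabla_{w_{2i}} C_{1i,1}^{l,1} = \E\!\left[\big(\nabla_{z^{l+1}}(-\tfrac1N\log p)\big)^\top w_{2i}\; a^l(s_i)\right],
\end{align*}
and Cauchy--Schwarz pulls out $\norm{w_{2i}}$, leaving the expectation of $\norm{\nabla_{z^{l+1}}(-\tfrac1N\log p)}\,|a^l(s_i)|$ to be bounded as before.

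For part (b) I would note that $C_{1i,1}^{l,1}$ and $C_{1i,0}^{l,1}$ share the same data and the same Bernoulli realizations except for $\xi_{1i}^l$, so their integrands differ only by removal of the single contribution $w_{2i}\,a^l(s_i)$ from $z^{l+1}$. Applying the mean value theorem along this perturbation,
\begin{align*}
|C_{1i,1}^{l,1} - C_{1i,0}^{l,1}| \le \E\!\left[\norm{\nabla_{z^{l+1}}(-\tfrac1N\log p)}\;\norm{w_{2i}}\;|a^l(s_i)|\right],
\end{align*}
and using $a^l(0)=0$ with (A2) gives $|a^l(s_i)|\le L_a|s_i|\le L_a\,\norm{w_{1i}}\,\norm{\bar z^l}$, producing a bound proportional to the \emph{product} $\norm{w_{1i}}\,\norm{w_{2i}}$. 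The final step converts this product into the required sum: by (A1) there is a uniform bound $B$ with $\norm{w_{1i}},\norm{w_{2i}}\le B$, whence $\norm{w_{1i}}\norm{w_{2i}}\le B(\norm{w_{1i}}+\norm{w_{2i}})$, giving (b) with $\kappa$ absorbing $B$, $L_a$ and the expectation bound.

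The main obstacle, where the bulk of the work lies, is establishing that the back-propagated output gradient $\nabla_{z^{l+1}}(-\tfrac1N\log p)$ and the forward features $\bar z^l$ have finite, uniformly bounded expectations and, crucially, that the resulting constants $\eta,\kappa$ depend only on the global bounds in \hyperlink{item:assumptions}{(A1)--(A3)} and not on the remaining network weights. I would control the forward pass recursively using $h^l(0)=a^l(0)=0$ and bounded derivatives to obtain $\norm{z^l},\norm{\bar z^l}\le (\text{polynomial in the weights})\,\norm{x}$, and control the output gradient through the smoothness of the Gaussian/cross-entropy model, for which $\nabla_{\hat y}(-\log p)$ is either bounded (cross-entropy) or grows like $\norm{\hat y}+\norm{y}$ (Gaussian), propagated back through bounded weights and activation derivatives. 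The products appearing above then carry powers of $\norm{x}$ up to the fourth and of $\norm{y}$ up to the second, and H\"older's and Cauchy--Schwarz's inequalities combined with the moment assumptions (A3) (precisely $\E\norm{x}^k$ for $k\le 4$ and $\E\norm{y}^k$ for $k\le 2$) close the estimates, with constants independent of the particular weights $w_{1i},w_{2i}$.
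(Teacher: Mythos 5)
The paper does not prove this lemma itself; it defers to \cite[Appendix A]{GUENTER2024_robust} and only remarks that the argument extends to residual networks by treating the skip connection as a nonlinear path with identity activation. Your reconstruction follows essentially that same standard route --- exploiting linearity of the pre-activation in $w_{1i}$ and of the block output in $w_{2i}$, the Lipschitz property $|a^l(s)|\leq L_a|s|$ from $a^l(0)=0$ and (A2), and the moment bounds (A3) to control the forward features and back-propagated gradients uniformly in the remaining weights --- and it is sound, including the conversion of the product bound $\norm{w_{1i}}\norm{w_{2i}}$ into the sum form via the uniform weight bound (A1).
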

Lemma~\ref{lm:bounds} has been proven in \cite[Appendix A]{GUENTER2024_robust}, for non-residual feed-forward NNs
but can be trivially extended to the residual NNs used here, by considering the skip connection as a part of the nonlinear layer path but with an identity activation function.

The following result  shows that  $C^{l,1}-C^{l,0}\rightarrow 0$ as $\theta_1\rightarrow0$ at the same rate.
\begin{lemma}\label{lm:bounds2}
	Under assumptions \hyperlink{item:assumptions}{(A1)-(A3)} the following bound holds:
	\begin{align}\label{eq:delC_bnd}
		|C^{l,1}-C^{l,0}| \leq \sum_i \theta_{1i} \kappa \left(\norm{w_{1i}}+\norm{w_{2i}}\right),
	\end{align}
	where the constants $0 < \eta,\kappa < \infty$ are the same as in Lemma~\ref{lm:bounds} and  are independent of the remaining weights of the network.
\end{lemma}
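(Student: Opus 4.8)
The plan is to exploit the multilinearity of $C^{l,1}$ in the unit-selection probabilities, the boundary identity $C^{l,1}\big|_{\theta_1=0}=C^{l,0}$, and the per-unit bound of Lemma~\ref{lm:bounds}(b). First I would establish the boundary identity. Exactly as argued in the proof of Theorem~\ref{thm:deterministic_network}, $C^{l,1}=\E[-\tfrac1N\log p(Y\mid X,W,\Xi)\mid \xi_B^l=1]$ is multilinear in $(\theta_{11},\dots,\theta_{1K})$ because the $\xi_{1i}^l$ are independent Bernoulli variables. Setting every $\theta_{1i}=0$ forces $\xi_{1i}^l=0$ almost surely, so (using $a^l(0)=0$) the nonlinear-path output $W_2^l(\xi_1^l\odot a^l(\cdot))$ vanishes and the block output collapses to $W_3^l\bar z^l$---precisely the output obtained when $\xi_B^l=0$. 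Since conditioning on $\xi_B^l$ perturbs only block $l$ and leaves every other random variable intact, the two conditional expected costs coincide, which gives $C^{l,1}\big|_{\theta_1=0}=C^{l,0}$.

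Next, reading off the single-unit decomposition in \eqref{eq:C_detail_notation}, namely $C^{l,1}=\theta_{1i}C_{1i,1}^{l,1}+(1-\theta_{1i})C_{1i,0}^{l,1}$, I obtain $\partial C^{l,1}/\partial\theta_{1i}=C_{1i,1}^{l,1}-C_{1i,0}^{l,1}$, which by Lemma~\ref{lm:bounds}(b) is bounded in magnitude by $\kappa(\norm{w_{1i}}+\norm{w_{2i}})$ uniformly in the remaining unit probabilities. Integrating along the segment $t\mapsto t\theta_1$, $t\in[0,1]$, and using the boundary identity,
\[
C^{l,1}-C^{l,0}=\int_0^1\sum_i\theta_{1i}\,\frac{\partial C^{l,1}}{\partial\theta_{1i}}\bigg|_{t\theta_1}\diff t ,
\]
so that the triangle inequality together with the uniform per-unit bound yields $|C^{l,1}-C^{l,0}|\le\sum_i\theta_{1i}\kappa(\norm{w_{1i}}+\norm{w_{2i}})$, as claimed. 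Equivalently, one may telescope over the units, switching them on one at a time and invoking linearity of $C^{l,1}$ in each $\theta_{1i}$ separately.

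The main obstacle is not the concluding bound---that is handed to us by Lemma~\ref{lm:bounds}(b)---but securing the two ingredients that allow the per-unit estimate to be applied here: (i) the structural identity $C^{l,1}\big|_{\theta_1=0}=C^{l,0}$, which rests on $a^l(0)=0$ and on the fact that toggling $\xi_B^l$ does not disturb the randomness in the other layers; and (ii) the uniformity of $\kappa$ across all admissible values of the remaining $\theta_{1j}$, so that the bound on $C_{1i,1}^{l,1}-C_{1i,0}^{l,1}$ persists when these probabilities are rescaled to $t\theta_{1j}$ along the integration path. Both follow from the construction of $\kappa$ in \cite{GUENTER2024_robust}, where it is shown to be independent of the remaining network weights, but they must be invoked explicitly to justify pulling the bound through the integral (or through the telescoping sum).
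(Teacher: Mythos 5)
Your proof is correct and rests on the same two ingredients as the paper's: the identity that $C^{l,1}$ with all units switched off coincides with $C^{l,0}$, and the per-unit bound of Lemma~\ref{lm:bounds}(b) applied uniformly in the remaining unit probabilities. The paper's own argument is precisely the discrete telescoping variant you mention at the end (successively conditioning $\xi_{11}^l=0,\xi_{12}^l=0,\dots$ and summing the resulting chain), so your integral-along-a-segment formulation is just a repackaging of the same idea; the only nitpick is that the vanishing of the nonlinear path when all $\xi_{1i}^l=0$ follows from the multiplication by $\xi_1^l$ alone and does not actually need $a^l(0)=0$.
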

\begin{proof}
	First, we extend the notation \eqref{eq:C_notation} as follows:
	\begin{align*}
		C_{j\thickbar{k}, 0}^{l, 1} &= \E_{\substack{\Xi\sim  q(\Xi\mid \Theta)}}\left[-\frac{1}{N}\log p(Y\mid X,W, \Xi) \mid \xi_B^l = 1,\, \xi_{j,1}^l = \xi_{j,2}^l=\ldots=\xi_{j,k}^l=0 \right],\quad\text{and}\quad\\
		C_{j\thickbar{k}, 1}^{l, 1} &= \E_{\substack{\Xi\sim  q(\Xi\mid \Theta)}}\left[-\frac{1}{N}\log p(Y\mid X,W, \Xi) \mid \xi_B^l = 1,\, \xi_{j,1}^l = \xi_{j,2}^l=\ldots=\xi_{j,k-1}^l=0,\,\xi_{j,k}^l=1 \right].
	\end{align*}
	
	Then, we readily obtain:
	\begin{align*}
		\begin{split}
			C^{l,1} &= \theta_{11}^l C_{11,1}^{l,1} + \left(1-\theta_{11}^l\right) C_{11,0}^{l,1}
			=\theta_{11}^l \left[C_{11,1}^{l,1}-C_{11,0}^{l,1}\right]+C_{11,0}^{l,1}\\
			C_{11,0}^{l,1} =C_{1 \thickbar 1,0}^{l,1} &= \theta_{12}^l C_{1\thickbar{2},1}^{l,1} + \left(1-\theta_{12}^l\right) C_{1\thickbar{2},0}^{l,1}
			=\theta_{12}^l \left[C_{1\thickbar{2},1}^{l,1}-C_{1\thickbar{2},0}^{l,1}\right]+C_{1\thickbar{2},0}^{l,1}\\
			&\vdots\\
			C_{1\thickbar{K-1},0}^{l,1} &= \theta_{1K}^l C_{1\thickbar{K},1}^{l,1} + \left(1-\theta_{1K}^l\right) C_{1\thickbar{K},0}^{l,1}
			=\theta_{1K}^l \left[C_{1\thickbar{K},1}^{l,1}-C_{1\thickbar{K},0}^{l,1}\right]+C^{l,0}
		\end{split}
	\end{align*}
	where $K$ is the number of units on the non-linear layer and the last relation follows by noting that $C_{1\thickbar{K},0}^{l,1}\equiv C^{l,0}$. Finally, adding the above equations yields
	\begin{align}\label{eq:theta_delC}
		C^{l,1} = \theta_{11}\left[C_{11,1}^{l,1}-C_{11,0}^{l,0}\right]+\ldots+
		\theta_{1K}\left[C_{1\thickbar{K},1}^{l,1}-C_{1\thickbar{K},0}^{l,1}\right]+C^{l,0}
	\end{align}
	and applying (\ref{eq:bounds}b) from Lemma~\ref{lm:bounds} to each of the terms $C_{1\thickbar k,1}^{l,1}-C_{1\thickbar k,0}^{l,0}$ in \eqref{eq:theta_delC} leads to
	\begin{align*}
		|C^{l,1}-C^{l,0}| \leq \sum_i \theta_{1i} \kappa \left(\norm{w_{1i}}+\norm{w_{2i}}\right).
	\end{align*}
\end{proof}

\subsection{Proof of Theorem~\ref{thm:stab_layer}}\label{app:proof_layer}
\begin{proof}	
	First, we compute and bound using \eqref{eq:bounds} (Lemma~\ref{lm:bounds}, \ref{app:proofs}) and \eqref{eq:delC_bnd} (Lemma~\ref{lm:bounds2}, \ref{app:proofs}) the Lie derivative of $\Lambda_B(x_B)$ along \eqref{eq:dyn_sys} as follows:
	\begin{align*}
	\begin{split}
	\dot \Lambda_L &=  \sum_i w_{1i}^T \dot w_{1i} +   \sum_i w_{2i}^T \dot w_{2i}  + \theta_B \dot \theta_B = -\lambda \left(\norm{W_1}^2 +\norm{W_2}^2 \right)\\
	&\hspace*{0.5cm} - \theta_B \sum_i \theta_{1i} w_{1i}^T \nabla_{w_{1i}} C_1^1- \theta_B \sum_i \theta_{1i} w_{1i}^T \nabla_{w_{2i}} C_1^1 - \theta_B \left(C^1-C^0\right) + \theta_B \left(-\nu\alpha-\norm{\theta_1}_1R-h_B\right)\\
	&\leq -\lambda \left(\norm{W_1}^2 +\norm{W_2}^2 \right)
	+ \theta_B \left(\eta\sum_i \theta_{1i} \norm{w_{1i}}  + \eta\sum_i \theta_{1i} \norm{w_{2i}} + |C^1-C^0|-\nu\alpha - \sum_i \theta_{1i} R - h_B\right) \\
	&\leq  -\lambda \left(\norm{W_1}^2 +\norm{W_2}^2 \right) + \theta_B\sum_i \theta_{1i}  \bigg( (\eta+\kappa)(\norm{w_{1i}}+\norm{w_{2i}}) -R \bigg)   -\theta_B\nu\alpha - \theta_B h_B.\\
	\end{split}
	\end{align*}
	In $\mathcal{D}_B$ we have
	\begin{align*}
	\norm{w_{ji}}\leq\norm{W_j} \leq \frac{R_m}{4(\eta+\kappa)}, \,\, j\in\{1,2\},
	\end{align*}
	and obtain that
	\begin{align*}
	\forall i: \, (\eta+\kappa)(\norm{w_{1i}}+\norm{w_{2i}}) -R \leq  \frac{R_m}{2} - R \leq \frac{-R_m}{2},
	\end{align*}
	using $R \geq R_m>0$ from \eqref{eq:Rm_neg_def}.
	Then, the Lie derivative is bounded as follows:
	\begin{equation}\label{eq:Lie_der_B_bnd}
	\dot \Lambda_B \leq -\lambda \left(\norm{W_1}^2 +\norm{W_2}^2 \right)-\theta_B\norm{\theta_1}_1 \frac{R_m}{2} -\theta_B\cdot\nu\alpha - \theta_B h_B 
	\leq \underbrace{-\lambda \left(\norm{W_1}^2 +\norm{W_2}^2 \right)-\theta_B\norm{\theta_1}_1\frac{R_m}{2}}_{\equalhat W_B(x_B, \theta_1)} \leq 0.
	\end{equation}
	In deriving \eqref{eq:Lie_der_B_bnd}, we used $\nu\alpha\geq 0$ and that for $\theta_{B} \in [0,1$], it is $h_B\geq 0$ from \eqref{eq:hB_def} and therefore $\theta_{B}h_{B}\geq 0$.
	If $\alpha>0$, then $\dot \Lambda_B$ is always strictly negative away from $x_B=0$ rendering this point asymptotically stable. However, $f_B=0$ and $\norm{\theta_1}_1=0$ leads to $\dot \Lambda_B = 0$ with possibly $\theta_B\neq 0$.
	In the latter case and under the assumption that the dynamics \eqref{eq:dyn_sys} are piecewise continuous in time and locally Lipschitz in $(x_B, \theta_1)$, uniformly in time, \cite[Theorem 8.4, p323.]{Khalil_nonlinear} applies and states that all solutions of \eqref{eq:dyn_sys} initialized in $\mathcal{D}_B$ are bounded and satisfy $W_B(x_B, \theta_1)\rightarrow 0$. In turn, this implies that $(x_B, \theta_1)$ approaches the set
	\begin{align*}
	&\{x_B \in \mathcal{D}_B, \theta_1\in [0,1]^K \mid W_B(x_B,\theta_1)=0\} \\
	&= \{x_B\in \mathcal{D}_B,\, \theta_1\in [0,1]^K \mid w_{1i}=0,\,w_{2i}=0,\, \norm{\theta_{1}}_1\cdot\theta_B=0\} = E_B.
	\end{align*}
	We note that a set of the form $\{(x_B,\theta_1) \in B_r \mid V_2(x_B,\theta_1) \leq \min_{\norm{(x_B,\theta_1)}=r} V_1(x_B,\theta_1)\}$ with the ball $B_r$, $r>0$ in \cite[Theorem 8.4, p.323]{Khalil_nonlinear} imposes no further restriction on the initial condition in our case, since $V_1(x_B,\theta_1)= V_2(x_B,\theta_1) = \Lambda_B(x_B,\theta_1)$.
	Lastly, we show invariance of the set $E_B$ for \eqref{eq:dyn_sys}. Since in $E_B$ it holds $W_1=0$, $W_2=0$, $\|\theta_1\|_1\cdot\theta_B=0$, it follows from the dynamics \eqref{eq:dyn_sys} that $\dot W_1=0$, $\dot W_2=0$ and thus $W_1$, $W_2$ remain at zero. In turn, this leads to $C^1-C^0=0$. Also, notice that $R>0$ from \eqref{eq:R_Rm} due to $\nu >0$ and $\beta\in [0,\,1]$.
	Then from \eqref{eq:dyn_sys}, assuming $\theta_B=0$ implies $\dot\theta_1=0$, i.e., $\theta_1$ remains constant in $[0,1]^K$ and it also follows $\dot\theta_B=0$ and that $\theta_B$ remains equal to zero.
	On the other hand, assuming $\theta_1=0$ implies $\dot\theta_B=0$ and $\theta_B$ remains constant, which in turn implies that $\dot\theta_1=0$ and $\theta_1$ remains equal to zero. Thus in both cases, we conclude that $E_B$ is an invariant set of \eqref{eq:dyn_sys}.
\end{proof}

\subsection{Proof of Theorem~\ref{thm:stab_unit}}\label{app:proof_unit}
\begin{proof}
	First, we compute and bound using \eqref{eq:bounds} (Lemma~\ref{lm:bounds}, \ref{app:proofs}) the Lie derivative of $\Lambda_U(x_U)$ along \eqref{eq:dyn_sys} as follows:
	\begin{align*}
	\dot \Lambda_U &=   w_{1i}^T \dot w_{1i} +  w_{2i}^T \dot w_{2i}  + \theta_{1i}\dot \theta_{1i}
	= -\lambda \left(\norm{w_{1i}}^2 +\norm{w_{2i}}^2 \right)\\
	&\hspace*{0.5cm} - \theta_B \theta_{1i} w_{1i}^T \nabla_{w_{1i}} C_{1i,1}^{1} - \theta_B\theta_{1i} w_{2i}^T\nabla_{w_{2i}} C_{1i,1}^{1}- \theta_B\theta_{1i} \left(C_{1i,1}^{1}-C_{1i,0}^{1}\right) - \theta_B\theta_{1i} R - \theta_{1i}h_{1i}\\
	&\leq -\lambda \left(\norm{w_{1i}}^2 +\norm{w_{2i}}^2 \right) + \theta_B\theta_{1i} \bigg((\eta+\kappa) \norm{w_{1i}}  + \eta \norm{w_{2i}} - R\bigg) - \theta_{1i}h_{1i}.
	\end{align*}
	In $\mathcal{D}_U$ we have
	\begin{align*}
	\norm{w_{ji}} \leq \frac{R_m}{4(\eta+\kappa)},\,\, j\in\{1,2\}
	\end{align*}
	and obtain that
	\begin{align*}
	\theta_B \theta_{1i}  \bigg( (\eta+\kappa)\norm{w_{1i}}+ \eta \norm{w_{2i}} -R \bigg)  \leq \theta_B\theta_{1i} \left( \frac{R_m}{2} -R\right) \leq \theta_B\theta_{1i} \frac{-R_m}{2},
	\end{align*}
	using $R \geq R_m>0$ from \eqref{eq:Rm_neg_def}.
	Then, the Lie derivative is bounded as follows:
	\begin{equation}\label{eq:Lie_der_bnd_U}
	\dot \Lambda_U \leq -\lambda \left(\norm{w_{1i}}^2 +\norm{w_{2i}}^2 \right) -\theta_B \theta_{1i} \frac{R_m}{2}  - \theta_{1i} h_{1i} 
	\leq \underbrace{-\lambda \left(\norm{w_{1i}}^2 +\norm{w_{2i}}^2 \right) -\theta_B \theta_{1i} \frac{R_m}{2}}_{\equalhat W_U(x_U, \theta_B)} \leq  0.
	\end{equation}		
	In deriving \eqref{eq:Lie_der_bnd_U}, we used that for $\theta_{1i} \in [0,1$], it is $h_{1i}\geq 0$ from \eqref{eq:hi_def} and therefore $\theta_{1i}h_{1i}\geq 0$. However, $\theta_B=0$ leads to $\dot\Lambda_U=0$ with possibly $\theta_{1i}\neq 0$.
	In the latter case and under the assumption that the dynamics \eqref{eq:dyn_sys} are piecewise continuous in time and locally Lipschitz in $(x_U, \theta_B)$, uniformly in time, \cite[Theorem 8.4, p323.]{Khalil_nonlinear} applies and states that all solutions of \eqref{eq:dyn_sys} initialized in $\mathcal{D}_U$ are bounded and satisfy $W_U(x_U, \theta_B)\rightarrow 0$. In turn, this implies that $(x_U, \theta_B)$ approaches the set
	\begin{align*}
	&E_U = \{x_U \in \mathcal{D}_U, \theta_B\in [0,1] \mid W_U(x_U,\theta_B)=0\} \\
	&= \{x_U\in \mathcal{D}_U,\, \theta_B\in [0,1] \mid w_{1i}=0,\,w_{2i}=0,\, \theta_{1i}\cdot\theta_B=0.\}
	\end{align*}
	We note that a set of the form $\{(x_U,\theta_B) \in B_r \mid V_2(x_U,\theta_B) \leq \min_{\norm{(x_U,\theta_B)}=r} V_1(x_U,\theta_B)\}$ with the ball $B_r$, $r>0$ in \cite[Theorem 8.4, p.323]{Khalil_nonlinear} imposes no further restriction on the initial condition in our case, since $V_1(x_U,\theta_B)= V_2(x_U,\theta_B) = \Lambda_U(x_U,\theta_B)$.
\end{proof}

\newpage

\end{document}